\newcommand{\alg}{{\textsc{CRAIG}}\xspace}
\newcommand{\strongf}{\ensuremath{\mu}}
\newcommand{\opt}{\ensuremath{w_*}\xspace}
\newcommand{\xhdr}[1]{{\noindent\bfseries #1}.}
\newtheorem{theorem}{Theorem}
\newtheorem{lemma}[theorem]{Lemma}
\newcommand{\hide}[1]{}
\theoremstyle{definition}
\icmltitlerunning{Coresets for Data-efficient Training of Machine Learning Models {\textcolor{red}{\bf PAGE \thepage}}}
\begin{document}

\twocolumn[
\icmltitle{Coresets for Data-efficient Training of Machine Learning Models}



\icmlsetsymbol{equal}{*}

\begin{icmlauthorlist}
\icmlauthor{Baharan Mirzasoleiman}{uc}
\icmlauthor{Jeff Bilmes}{uw}
\icmlauthor{Jure Leskovec}{st}
\end{icmlauthorlist}

\icmlaffiliation{uc}{Department of Computer Science, University of California, Los Angeles, USA}
\icmlaffiliation{st}{Department of Computer Science, Stanford University, Stanford, USA}
\icmlaffiliation{uw}{Department of Electrical Engineering, University of Washington, Seattle, USA}

\icmlcorrespondingauthor{Baharan Mirzasoleiman}{baharan@cs.ucla.edu}

\icmlkeywords{Machine Learning, ICML}

\vskip 0.3in
]



\printAffiliationsAndNotice{}  

\begin{abstract} 
Incremental gradient (IG) methods, such as stochastic gradient descent and its variants 
are commonly used for large scale optimization in machine learning. Despite the sustained effort to make IG methods more data-efficient, it remains an open question how to select a training data subset that can theoretically and practically perform on par with the full dataset. Here we develop \alg, a method to select a weighted subset (or coreset) of training data that closely estimates the full gradient by maximizing a submodular function.
We prove that applying IG to this subset is guaranteed to converge to the (near)optimal solution with the same convergence rate as that of IG for convex optimization. As a result, \alg achieves a speedup that is inversely proportional to the size of the subset.
To our knowledge, this is the \textit{first} rigorous method for data-efficient training of \textit{general} machine learning models. 
Our extensive set of experiments show that \alg, while achieving practically the same solution, speeds up various IG methods by up to 6x for logistic regression and 3x for training deep neural networks\footnote[2]{Code available at {\small \url{https://github.com/baharanm/craig}}}.
\end{abstract}


\section{Introduction}
Mathematical optimization lies at the core of training large-scale machine learning systems, and is now widely used over massive data sets with great practical success, assuming sufficient data resources are available. Achieving this success, however, also requires large amounts of (often GPU) computing, as well as concomitant financial expenditures and energy usage \citep{strubell2019energy}. Significantly decreasing these costs without decreasing the learnt system's resulting accuracy is one of the grand challenges of machine learning and artificial intelligence today \citep{asi2019importance}.

Training machine learning models often reduces to 
optimizing a regularized empirical risk function. Given a convex loss $l$, and a $\mu$-strongly convex regularizer $r$, one aims to find model parameter vector $w_*$ over the parameter space $\mathcal{W}$ that minimizes the loss $f$ over the training data $V$:
\vspace{-0mm}
\begin{align}\label{eq:problem}
\opt \in {\arg\min}_{w \in \mathcal{W}}& f(w), \quad f(w) := \sum_{i\in V} f_i(w) + r(w),\nonumber\\ 
&f_i(w) = l(w, (x_i, y_i)), 
\vspace{-0mm}
\end{align}
where $V = \{1,\dots,n\}$ is an index set of the training data, and functions $f_i: \mathbb{R}^d \rightarrow \mathbb{R}$ are associated with training examples $(x_i, y_i)$, where $x_i\in \mathbb{R}^d$ is the feature vector, and $y_i$ is the point $i$'s label.

Standard Gradient Descent can find the minimizer of this problem, but requires repeated computations of the full gradient $\nabla f(w)$---sum of the gradients over all training data points/functions $i$---and is therefore prohibitive for massive data sets. This issue is further exacerbated in case of deep neural networks where gradient computations (backpropagation) are expensive. 
Incremental Gradient (IG) methods, such as Stochastic Gradient Descent (SGD) and its accelerated variants, including SGD with momentum \citep{qian1999momentum}, Adagrad \citep{duchi2011adaptive}, Adam~\citep{kingma2014adam}, SAGA~\citep{defazio2014saga}, and SVRG~\citep{johnson2013accelerating} iteratively estimate the gradient on random subsets/batches of training data. While this provides an unbiased estimate of the full gradient, the randomized batches introduce variance in the gradient estimate \citep{hofmann2015variance}, and therefore stochastic gradient methods are in general slow to converge \citep{johnson2013accelerating, defazio2014saga}. The majority of the work speeding up IG methods has thus primarily focused on reducing the variance of the gradient estimate (SAGA~\citep{defazio2014saga}, SVRG~\citep{johnson2013accelerating}, Katysha \citep{allen2017katyusha}) or more carefully selecting the gradient stepsize (Adagrad \citep{duchi2011adaptive}, Adadelta \citep{zeiler2012adadelta}, Adam~\citep{kingma2014adam}).

However, the direction that remains largely unexplored is how to carefully select a small subset $S \subseteq V$ of the full training data $V$, so that the model is trained only on the subset $S$ while still (approximately) converging to the globally optimal solution (i.e., the model parameters that would be obtained if training/optimizing on the full $V$). If such a subset $S$ can be quickly found, then this would directly lead to a speedup of $|V|/|S|$ (which can be very large if $|S| \ll |V|$) per epoch of IG.

There are four main challenges in finding such a subset $S$. First is that a guiding principle for selecting $S$ is unclear. For example, selecting training points close to the decision boundary might allow the model to fine tune the decision boundary, while picking the most diverse set of data points would allow the model to get a better sense of the training data distribution. Second is that finding $S$ must be fast, as otherwise identifying the set $S$ may take longer than the actual optimization, and so no overall speed-up would be achieved. Third is that finding a subset $S$ is not enough. One also has to decide on 
a gradient stepsize for each data point in $S$, as they affect the convergence. 
And last, while the method might work well empirically on some data sets, one also requires theoretical understanding and mathematical convergence guarantees.

Here we develop {\em Coresets for Accelerating Incremental Gradient descent (\alg)}, for selecting a subset of training data points to speed up training of large machine learning models. Our key idea is to select a weighted subset $S$ of training data $V$ that best approximates the full gradient of $V$.
We prove that the subset $S$ that minimizes an upper-bound on the error of estimating the full gradient maximizes a submodular facility location function. 
Hence, $S$ can be efficiently found using a fast greedy algorithm. 

We also provide theoretical analysis of \alg and prove its convergence. 
Most importantly, we show that any incremental gradient method (IG) on $S$ converges in the same number epochs as the same IG would on the full $V$, which means that we obtain a speed-up inversely proportional to the size of $S$. In particular, for a $\strongf$-strongly convex risk function and a subset $S$ selected by \alg that estimates the full gradient by an error of at most $\epsilon$, we prove that IG on $S$ with diminishing stepsize $\alpha_k=\alpha/k^\tau$ at epoch $k$ (with $0<\tau<1$ and $0<\alpha$), converges to an $2R\epsilon/\strongf^2$ neighborhood of the optimal solution at rate $\mathcal{O}(1/\sqrt{k})$. Here, $R=\min\{d_0, (r\gamma_{\max}C + \epsilon)/\strongf \}$ where $d_0$ is the initial distance to the optimum, $C$ is an upper-bound on the norm of the gradients, $r=|S|$, and $\gamma_{\max}$ is the largest weight for the elements in the subset obtained by \alg. Moreover, we prove that if in addition to the strong convexity, component functions have smooth gradients, IG with the same diminishing step size on subset $S$ converges to a $2\epsilon/\strongf$ neighborhood of the optimum solution at rate $\mathcal{O}(1/k^\tau)$.
 
The above implies that IG on $S$ converges to the same solution and in the same number of epochs as IG on the full $V$.
But because every epoch only uses a subset $S$ of the data, it requires fewer gradient computations and thus leads to a $|V|/|S|$ speedup over traditional IG methods, while still (approximately) converging to the optimal solution. 
We also note that \alg is complementary to various incremental gradient (IG) methods (SGD, SAGA, SVRG, Adam), and such methods can be used on the subset $S$ found by \alg.



We also demonstrate the effectiveness of \alg via an extensive set of experiments using logistic regression (a convex optimization problem) as well as training deep neural networks (non-convex optimization problems). We show that \alg speeds up incremental gradient methods, including SGD, SAGA, and SVRG. 
In particular, \alg while achieving practically the same loss and accuracy as the underlying incremental gradient descent methods, speeds up gradient methods by up to 6x for convex and 3x for non-convex loss functions. 

\section{Related Work} 
Convergence of IG methods has been long studied under various conditions \citep{zhi1994analysis, mangasariany1994serial, bertsekas1996incremental, solodov1998incremental, tseng1998incremental}, however IG's convergence rate has been characterized only more recently (see \citep{bertsekas2015incremental} for a survey). 
In particular, \cite{nedic2001convergence} provides a $\mathcal{O}(1/\sqrt{k})$ convergence rate for diminishing stepsizes $\alpha_k$ per epoch $k$ under a strong convexity assumption, and \cite{gurbuzbalaban2015random} proves a $\mathcal{O}(1/k^\tau)$ convergence rate with diminishing stepsizes $\alpha_k=\Theta(1/k^\tau)$ for $\tau\in (0,1]$ under an additional smoothness assumption for the components. 
While these works provide convergence on the full dataset, 
our analysis provides the same convergence rates on subsets obtained by \alg.
 
\hide{
It has been empirically observed that ordering of data significantly affects the convergence rate of IG.
However, finding a favorable ordering for IG has been a long standing open question. 
Among the few results are that of \citep{recht2012beneath} showing that 
without-replacement random sampling improves convergence of IG for least means squares problem, and the very recent result of \citep{gurbuzbalaban2017convergence} showing that
a Random Reshuffling (RR) method with iterate averaging and a diminishing stepsize $\Theta(1/k^\tau)$ for $\tau \!\in \!(1/2,\!1)$ converges at rate $\Theta(1/k^{2\tau})$ with probability one in the suboptimality of the objective value, thus improving upon the  $\Omega(1/k)$ rate of SGD. 
Contrary to the above randomized analysis, we propose the first deterministic ordering on the data points and empirically show that the ordering provided by \alg provides a significant speedup for the convergence of IG. 
}

Techniques for speeding up SGD, are mostly focused on variance reduction techniques 
\citep{roux2012stochastic, shalev2013stochastic, johnson2013accelerating,hofmann2015variance,allen2016exploiting},
and accelerated gradient methods when the regularization parameter is small \citep{frostig2015regularizing, lin2015universal,xiao2014proximal}. 
Very recently, \cite{hofmann2015variance,allen2016exploiting} exploited neighborhood structure to further reduce the variance of stochastic gradient descent and improve its running time. 
Our \alg method and analysis are complementary to variance reduction and accelerated methods. \alg can be applied to all these methods as well to speed them up.

Coresets are weighted subsets of the data, which guarantee that models fitting the coreset also provide a good fit for the original data. 
Coreset construction methods traditionally perform importance sampling with respect to sensitivity score, 
to provide high-probability solutions \cite{har2004coresets, lucic2017training, cohen2017input} for a particular problem, 
such as $k$-means and $k$-median clustering \cite{har2004coresets}, na\"{\i}ve Bayes and nearest-neighbors \cite{wei2015submodularity}, mixture models \cite{lucic2017training}, 
low rank approximation \cite{cohen2017input}, spectral approximation \cite{agarwal2004approximating, li2013iterative}, Nystrom methods \cite{agarwal2004approximating, musco2017recursive},
and Bayesian inference \cite{Campbell18_ICML}.
Unlike existing coreset construction algorithms, our method is not problem specific and can be applied for training general machine learning models.

\section{Coresets for Accelerating Incremental Gradient Descent (\alg)}
We proceed as follows: First, we define an objective function $L$ for selecting an optimal set $S$ of size $r$ that best approximates the gradient of the full training dataset $V$ of size $n$. Then, we show that $L$ can be turned into a submodular function $F$ and thus $S$ can be efficiently found using a fast greedy 
algorithm. Crucially, we also show that for convex loss functions the approximation error between the estimated and the true gradient can be efficiently minimized in a way that is independent of the actual optimization procedure. Thus, \alg can simply be used as a preprocessing step before the actual optimization starts.

Incremental gradient methods aim at estimating the full gradient $\nabla f(w)$ over $V$ by iteratively making a step based on the gradient of every function $f_i$. 
%
Our key idea in \alg is that if we can find a small subset $S$ such that the weighted sum of
the gradients of its elements closely approximates the full gradient over $V$,
 we can apply IG only to the set $S$ (with stepsizes equal to the weight of the elements in $S$), and we should still converge to 
the (approximately) optimal solution, but much faster. 

Specifically, our goal in \alg is to find the smallest subset $S \subseteq V$ 
and corresponding per-element stepsizes $\gamma_j > 0$ that approximate the full gradient with an error at most $\epsilon > 0$ for all the possible values of the optimization parameters $w \in \mathcal{W}$.\footnote{Note that in the worst case we may need $|S^*| \approx |V|$ to approximate the gradient. However, as we show in experiments, in practice we find that 
a small subset is sufficient to accurately approximate the gradient.}
\begin{align}
\label{eq:error}
S^*=&{\arg\min}_{S \subseteq V, \gamma_j \geq 0~ \forall j} |S|, 
\;\; \text{s.t.}  \nonumber\\ 
&\max_{w\in\mathcal{W}}\| \sum_{i\in V} \nabla f_i(w) - \sum_{j \in S} \gamma_{j} \nabla f_{j}(w) \| \leq \epsilon. 
\end{align}
Given such an $S^*$ and associated weights $\{\gamma\}_j$, we are guaranteed that gradient updates on $S^*$ will be similar to the  gradient updates on $V$ regardless of the value of $w$.

Unfortunately, directly solving the above optimization problem is not feasible, due to two problems. Problem 1:  Eq.~(\ref{eq:error}) requires us to calculate the gradient of all the functions $f_i$ over the entire space $\mathcal{W}$, which is too expensive and would not lead to overall speedup. 
In other words, it would appear that solving for $S^*$ is as difficult as solving Eq.~(\ref{eq:problem}), as it involves calculating $\sum_{i \in V} \nabla f_i(w)$ for various $w \in \mathcal{W}$.
And Problem 2: even if calculating the normed difference between the gradients in Eq.~(\ref{eq:error}) would be fast, as we discuss later finding the optimal subset $S^*$ in NP-hard. 
In the following, we address the above two challenges and discuss how we can quickly find a near-optimal subset $S$.

\subsection{Upper-bound on the Estimation Error}\label{sec:upper}
We first address Problem 1, i.e., how to quickly estimate the error/discrepancy of the weighted sum of gradients of functions $f_j$ associate with data points $j \in S$, vs the full gradient, for every $w \in \mathcal{W}$.

Let $S$ be a subset of $r$ data points. 
Furthermore, assume that there is a mapping $\varsigma_w: V \rightarrow S$ that for every $w\in \mathcal{W}$ assigns every data point $i\in V$ 
to one of the elements $j$ in $S$, i.e., ${\varsigma_w(i)}={j} \in S$. Let $C_j = \{ i \in [n] | {\varsigma(i)} = {j} \} \subseteq V$ be the set of 
data points that are assigned to ${j} \in S$, and $\gamma_j = |C_j|$ be the number of such data points. 
Hence, $\{C_j\}_{j=1}^r$ form a partition of $V$.
Then, for any arbitrary (single) $w \in \mathcal{W}$ we can write 
\vspace{-1mm}
\begin{align}
\!\!\sum_{i\in V} \!\nabla f_i(w) 
&\!=\!  \sum_{i \in V}  \!\big( \nabla f_i(w) \!-\! \nabla f_{\varsigma_w(i)}(w) \!+\!\nabla f_{\varsigma_w(i)}(w) \big)\!\!\\
= \sum_{i\in V}  \big( &\nabla f_i(w) - \nabla f_{\varsigma_w(i)}(w) \big) + \sum_{j\in S} \gamma_j \nabla f_{j}(w).
\vspace{-2mm}
\end{align}
Subtracting and then taking the norm of the both sides, we get an
upper bound on the error of estimating the full gradient
 with the weighted sum of the gradients of the functions $\!f_{j}$ for $\!j \!\in \!S$. I.e.,
\begin{align}
\vspace{-2mm}
\| \sum_{i\in V} \nabla f_i(w) - \sum_{j\in S} &\gamma_j \nabla f_{j}(w) \| 
\leq \nonumber\\
&\sum_{i\in V}  \| \nabla f_i(w) - \nabla f_{\varsigma_w(i)}(w) \|,  \label{eq:upper}
\vspace{-2mm}
\end{align}
where the inequality follows from the triangle inequality. 
The upper-bound in Eq. (\ref{eq:upper}) is minimized when 
$\varsigma_w$ assigns every $i \in V$ to an element in $S$ with most gradient similarity at $w$, or minimum Euclidean distance between the gradient vectors at $w$. That is:
$\!\varsigma_w(i) \in {\arg\min}_{j \in S} \| \nabla f_i(w) - \nabla f_{j}(w) \|$. Hence,
\begin{align}
\hspace{-2mm}
\min_{S\subseteq V} \| \sum_{i\in V} \nabla f_i(w) - & \sum_{j\in S} \gamma_j \nabla f_{j}(w) \| 
\leq \nonumber\\
& \sum_{i\in V} \min_{j \in S} \| \nabla f_i(w) - \nabla f_{j}(w) \|.
\vspace{-2mm}\label{eq:min_upper}
\end{align}
The right hand side of Eq. (\ref{eq:min_upper}) is minimized when 
$S$ is the set of $r$ {\em medoids} (exemplars) \cite{kaufman1987clustering} for all the components in the gradient space. 
%

So far, we considered upper-bounding the gradient estimation error at a particular $w \in \mathcal{W}$. To bound the estimation error for all $w \in \mathcal{W}$, we consider a worst-case approximation of the estimation error over the entire parameter space $\mathcal{W}$. Formally, we define a distance metric $d_{ij}$ between gradients of $f_i$ and $f_j$ as the maximum normed difference between $\nabla f_i(w)$ and $\nabla f_j(w)$ over all $w \in \mathcal{W}$:
\begin{equation}
    d_{ij} \triangleq \max_{w \in \mathcal{W}} \| \nabla f_i(w) - \nabla f_{j}(w) \|.
\end{equation}
Thus, by solving the following minimization problem,  we obtain the smallest weighted subset $S^*$ that approximates the full gradient by an error of at most $\epsilon$ for all $w \in \mathcal{W}$:
\begin{eqnarray}\label{eq:fl_min}
\vspace{-1mm}
S^*\!\!=\!{\arg\min}_{S \subseteq V}|S|, \quad \text{s.t.} \quad L(S)\triangleq\sum_{i\in V} \min_{j \in S} d_{ij} \!\leq\! \epsilon.
\vspace{-1mm}
\end{eqnarray}
%
Note that Eq. (\ref{eq:fl_min}) requires that the gradient error is bounded over $\!\mathcal{W}$. However, we show (Appendix \ref{app:grad_bound}) for several classes of convex problems, including linear regression, ridge regression, logistic regression, and regularized support vector machines (SVMs), the normed gradient difference between data points can be efficiently boundedly approximated by \citep{allen2016exploiting, hofmann2015variance}:
\begin{align}\label{eq:upper_feat}
\vspace{-2mm}
    \forall  w,i,&j\quad \| \nabla f_{i}(w) - \nabla f_{j}(w) \| \leq d_{ij} \leq \nonumber\\
    &\max_{w \in \mathcal{W}} \mathcal{O}(\| w \|) \cdot \| x_i - x_j \| = \text{const.} ~ \| x_i - x_j \|.
\vspace{-1mm}
\end{align}
Note when $\| w \|$ is bounded for all $w \in \mathcal{W}$, i.e., $\max_{w \in \mathcal{W}} \mathcal{O}(\| w \|) < \infty$, upper-bounds on the Euclidean distances between the gradients can be pre-computed. This is crucial, because it means that estimation error of the full gradient can be efficiently bounded independent of the actual optimization problem (i.e., point $w$). Thus, these upper-bounds can be computed only once as a pre-processing step before any training takes place, and then used to find the subset $S$ by solving the optimization problem (\ref{eq:fl_min}). We address upper-bounding the normed difference between gradients for deep models in Section \ref{sec:deep_up}.


\subsection{The \alg Algorithm}\label{sec:alg}

Optimization problem~(\ref{eq:fl_min}) produces a subset $S$ of elements with their associated weights $\{\gamma\}_{j \in S}$ or per-element stepsizes that closely approximates the full gradient. 
Here, we show how to efficiently approximately solve the above optimization problem 
to find a near-optimal subset $S$.

The optimization problem (\ref{eq:fl_min}) is NP-hard as it involves calculating the value of $L(S)$ for all the $2^{|V|}$ subsets $S \subseteq V$. We show, however, that we can transform it into a {\em submodular set cover problem}, that can be efficiently approximated.

Formally, $F$ is submodular if $F(S\cup\{e\}) - F(S) \geq F(T\cup\{e\}) - F(T),$ for any $S\subseteq T \subseteq V$ and $e\in V\setminus T$. We denote the {\em marginal} utility of an element $e$ w.r.t. a subset $S$ as $F(e|S) =F(S\cup\{e\}) - F(S)$. Function $F$ is called \textit{monotone} if $F(e|S)\geq 0$ for any $e\!\in\! V\!\setminus \!S$ and $S\subseteq V$.
The submodular cover problem is defined as finding the smallest set $S$ that  achieves utility $\rho$. Precisely, 
\begin{equation} \label{problem1}
S^*={\arg\min}_{S \subseteq V} |S|, \quad \text{s. t.} \quad F(S) \geq \rho.
\vspace{-1mm}
\end{equation} 
Although finding $S^*$  is NP-hard since it captures such well-known NP-hard problems such as Minimum Vertex Cover, for many classes of submodular functions, a simple greedy algorithm is known to be very effective~\citep{nemhauser1978,wolsey1982analysis}.  The greedy algorithm starts with the empty set $S_0=\emptyset$, and at each iteration $i$, it  chooses an element $e\in V$ that maximizes $F(e|S_{i-1})$, i.e., 
$S_i = S_{i-1}\cup\{{\arg\max}_{e\in V} F(e|S_{i-1})\}.$ Greedy gives us a logarithmic approximation, i.e. $|S| \leq \big(1+ \ln (\max_e F(e|\emptyset))\big) |S^*|$ \cite{wolsey1982analysis}. The computational complexity of the greedy algorithm is $\mathcal{O}(|V|\cdot |S|)$. However, its running time can be reduced to $\mathcal{O}(|V|)$ using stochastic algorithms \citep{mirzasoleiman2015lazier} and further improved using lazy evaluation \citep{minoux1978accelerated}, and distributed implementations \citep{mirzasoleiman2015distributed, mirzasoleiman2016fast}.
Given a subset $S \subseteq V$, the facility location function quantifies the coverage of the whole data set $V$ by the subset $S$ by summing the similarities between every $i \in V$ and its closest element $j \in S$. Formally, facility location is defined as $F_{fl}(S)=\sum_{i \in V} \max_{j \in S} s_{i,j}$, where $s_{i,j}$ is the similarity between $i,j\in V$. The facility location function has been used in various 
summarization applications
\citep{lin2009-submodsum,lin2012learning}.
By introducing an auxiliary element $s_0$ we can turn $L(S)$ in Eq.~(\ref{eq:fl_min})  
into a monotone submodular facility location function,
\vspace{0mm}\begin{equation}
F(S)=L(\{s_0\})-L(S\cup \{s_0\}),
\vspace{-1mm}
\end{equation}
where $L(\{s_0\})$ is a constant. In words, $F$ measures the decrease in the estimation error associated with the set $S$ versus the estimation error associated
with just the auxiliary element. For a suitable choice of $s_0$, maximizing $F$ is equivalent to minimizing $L$.
Therefore, we apply the greedy algorithm to approximately solve the following problem to get the subset $S$ defined in Eq. (\ref{eq:fl_min}):
\begin{equation} \label{problem2}
S^*={\arg\min}_{S \subseteq V} |S|, \quad \text{s.t.} \quad F(S) \geq L(\{s_0\}) - \epsilon.
\end{equation} 
At every step, the greedy algorithm selects an element that reduces the upper bound on the estimation error the most. In fact, the size of the smallest subset $S$ that estimates the full gradient by an error of at most $\epsilon$ depends on the structural properties of the data. Intuitively, as long as the marginal gains of facility location are considerably large, we need more elements to improve our estimation of the full gradient.
Having found $S$, the weight $\gamma_j$ of every element $j \in S$ is the number of components that are closest to it in the gradient space, and are used as stepsize of element $j\in S$ during IG.
The pseudocode for \alg 
is outlined in Algorithm~\ref{alg:tree}.

Notice that \alg creates subset $S$ incrementally one element at a time, which produces a natural order 
to the elements in $S$. 
Adding the element with largest marginal gain $j\in {\arg\max}_{e \in V} F(e|S_{i-1})$
improves our estimation from the full gradient by an amount bounded by the marginal gain.   
At every step $i$, we have $F(S_{i}) \geq (1-e^{-i/|S|}) F(S^*)$.
Hence, for a greedily ordered subset $S=\{s_{1},\cdots,s_{k}\}$, we have
\vspace{-3mm}
\begin{equation}
    \|\sum_{i \in V} \nabla f_i(w) - \sum_{j=1}^k \gamma_{s_j} \nabla f_{s_j}(w) \| \leq c - (1-e^{-j/k})L(S^*),
\vspace{-1mm}
\end{equation}
where $c$ is a constant.
Intuitively, the first elements of the ordering contribute the most to provide a close approximation of the full gradient and the rest of the elements further refine the approximation. Hence, the first incremental gradient updates gets us close to $\opt$, and the rest of the updates further refine the solution.

\hide{
at every step we have
\begin{eqnarray}
\| \sum_v \nabla f_v - \sum_{\sigma_i \in S_j^{g}} n_i \nabla f_{\sigma_i} \|_2^2 
&\leq& \text{cnt}'' - (1-e^{-j/k}) \sum_{v \in V} \min_{\sigma_i \in S^*} \| \nabla f_v - \nabla f_{\sigma_i} \|_2^2. 
\end{eqnarray} 

The upper-bound in Problem~(\ref{eq:min_upper}) is related to the
submodular facility location function which has the form
$h(S) = \sum_{i=1}^n \max_{s \in S} w(s,i) = c - \sum_{i=1}^n \min_{s
	\in S} (c - w(s,i))$ for any constant $c$. If $c$ is large enough,
$h$ can be maximized,
subject to various constraints, approximately optimally using any of a
variety of greedy procedures \cite{nemhauser1978,wolsey1982analysis}.



By introducing an auxiliary element $f_{s_0}$ we can turn the minimization problem~(\ref{eq:fl_min}) into a monotone submodular cover problem, which can be efficiently approximated. 
\begin{eqnarray} \label{eq:flmax}
S^*&=&{\arg\min}_{S \subseteq\{f_1, \cdots, f_n\}} |S|,  \quad \text{such that} \hfill \nonumber \\
&& F_{fl}(f_{s_0}) - F_{fl}(S \cup \{f_{s_0}\} ) \geq F_{fl}(f_{s_0})  - \epsilon \quad\quad\quad \forall w \in \mathcal{W}.
\end{eqnarray} 
Technically, any element $f_{s_0}$ that satisfies  $\max_{}\| \nabla f_{s_0}(w) - \nabla f_i(w) \| \leq \| \nabla f_{s_0}(w) - \nabla f_i(w) \|, \forall i \in [n], \forall w \in \mathcal{W}$ can be used as an auxiliary element.
}
\hide{
	Let $S^{g}$ be the greedy solution to the above maximization problem, then
	\begin{eqnarray}
	\text{cnt} - \| \sum_v \nabla f_v - \sum_{\sigma_i \in S^{g}} n_i \nabla f_{\sigma_i} \|_2^2 
	&\geq& \text{cnt} - \sum_{v \in V} \min_{\sigma_i \in S^g} \| \nabla f_v - \nabla f_{\sigma_i} \|_2^2 \\
	&\geq&  (1-1/e)[\text{cnt} - \sum_{v \in V} \min_{\sigma_i \in S^*} \| \nabla f_v - \nabla f_{\sigma_i} \|_2^2]. 
	\end{eqnarray} 
	Therefore,
	\begin{eqnarray}
	\| \sum_v \nabla f_v - \sum_{\sigma_i \in S^{g}} n_i \nabla f_{\sigma_i} \|_2^2 
	&\leq& \text{cnt}' - (1-1/e) \sum_{v \in V} \min_{\sigma_i \in S^*} \| \nabla f_v - \nabla f_{\sigma_i} \|_2^2. 
	\end{eqnarray} 

Similarly, at every step we have
\begin{eqnarray}
\| \sum_v \nabla f_v - \sum_{\sigma_i \in S_j^{g}} n_i \nabla f_{\sigma_i} \|_2^2 
&\leq& \text{cnt}'' - (1-e^{-j/k}) \sum_{v \in V} \min_{\sigma_i \in S^*} \| \nabla f_v - \nabla f_{\sigma_i} \|_2^2. 
\end{eqnarray} 
}

\begin{algorithm}[t]
	\begin{algorithmic}[1]
		\REQUIRE Set of component functions $f_i$ for $i \in V=[n]\}$.
		\ENSURE Subset $S \subseteq V$ with corresponding per-element stepsizes $\{\gamma\}_{j\in S}$. 
		\STATE $S_0 \leftarrow \emptyset, s_0=0, i=0.$
		\WHILE {$F(S) < L(\{s_0\})  - \epsilon$}
		\STATE $j\in {\arg\max}_{e \in V\setminus S_{i-1}} F (e|S_{i-1})$
		\STATE $S_i = S_{i-1}\cup \{j\}$
		\STATE $i = i + 1$
		\ENDWHILE
		\FOR {$j=1$ to $|S|$}
		\STATE{$\gamma_j = \sum_{i\in V} \mathbb{I} \big[ j = {\arg\min}_{s \in S} {\max_{w \in \mathcal{W}}}  \| \nabla f_i(w) - \nabla f_{s}(w) \|  \big]$}
		\ENDFOR
	\end{algorithmic}
	\caption{\textsc{\alg} (CoResets for Accelerating Incremental Gradient descent) }
	\label{alg:tree}
\end{algorithm}
	\vspace{-1mm}
	
\subsection{\alg with Limited Budget}\label{sec:dual}
In practice, we often have a limited budget in terms of time or computational resources, and we are interested to find a near-optimal subset of size $r$ that best approximates the full gradient.
This problem can be formulated as a submodular maximization problem which is dual to the submodular cover problem (\ref{problem2}): 
\begin{equation} \label{dual}
S^*\in{\arg\max}_{S \subseteq V} F(S), \quad \text{s.t.} \quad |S| \leq r.
\end{equation}
For the above submodular maximization problem, the greedy algorithm discussed in Section \ref{sec:alg} provides a $(1-1/e)$-approximation to the optimal solution. 
For a subset $S$ of size at most $r$ obtained by the greedy algorithm, we can calculate the value of $\epsilon$ as follows:
\begin{equation}
    \epsilon \leq F(S) - L(\{s_0\}).
\end{equation}
We use this formulation in our experiments in Section \ref{sec:experiments}.

\subsection{Application of \alg to Deep Networks}\label{sec:deep_up}
As discussed, \alg selects a subset that closely approximates the full gradient, and hence can be also applied for speeding up training deep networks.
The challenge here is that we cannot use inequality (\ref{eq:upper_feat}) to bound the normed difference between gradients for all $w \in \mathcal{W}$ and find the subset as a preprocessing step.

However, for deep neural networks, the variation of the gradient norms is mostly captured by the gradient of the loss w.r.t. the input to the last layer [Section 3.2 of \cite{katharopoulos2018not}]. We show (Appendix \ref{app:grad_bound}) that the normed gradient difference between data points can be efficiently bounded approximately by
\begin{align}
\| \nabla f_i(w) - \nabla & f_j(w) \| \leq & \\ 
c_1\|\Sigma'_L(z_i^{(L)}) & \nabla f_i^{(L)}(w)- \Sigma'_L(z_j^{(L)}) \nabla f_j^{(L)}(w)\|+c_2, \nonumber
\end{align}
where $\Sigma'_L(z_i^{(L)})\nabla f_i^{(L)}(w)$ is gradient of the loss  w.r.t. the input to the last layer for data point $i$, and $c_1, c_2$ are constants.
The above upper-bound depends on parameter vector $w$ which changes during the training process. Thus, we need to use \alg to update the subset $S$ after a number of parameter updates.


%
The above upper-bound is often only slightly more expensive than calculating the loss. 
For example, in cases where we have cross entropy loss with soft-max as the last layer, the gradient of the loss w.r.t. the $i$-th input to the soft-max is simply $p_i-y_i$, where $p_i$ are logits (dimension $p\!-\!1$ for $p$ classes) and $y$ is the one-hot encoded label. In this case, \alg does not need any backward pass or extra storage. 
Note that, although \alg needs an additional $\mathcal{O}(|V|\cdot|S|)$ complexity (or $\mathcal{O}(|V|)$ using stochastic greedy) to find the subset $S$ at the beginning of every epoch, this complexity does not involve any (exact) gradient calculations and is negligible compared to the cost of backpropagations performed during the epoch. Hence, as we show in the experiments \alg is practical and scalable.


\section{Convergence Rate Analysis of \alg}
The idea of \alg is to selects a subset that closely approximates the full gradient, and hence can be applied to speed up most IG variants as we show in our experiments. Here, we briefly introduce the original IG method, and then prove the convergence rate of IG applied 
to \alg subsets.

\subsection{Incremental Gradient Methods (IG)}


Incremental gradient (IG) methods are core algorithms for solving
Problem~(\ref{eq:problem}) and are widely used and studied. IG aims at
approximating the standard gradient method by sequentially stepping
along the gradient of the component functions $f_i$ in a cyclic
order. Starting from an initial point $w_0^1 \in \mathbb{R}^d$, it
makes $k$ passes over all the $n$ components. At every epoch
$k \geq 1$, it iteratively updates $w_{i-1}^k$ based on the gradient of
$f_i$ for $i=1,\cdots,n$ using stepsize $\alpha_k > 0$. I.e.,
\begin{equation}\label{eq:update}
\vspace{-1mm}
w_{i}^k = w_{i-1}^k - \alpha_k \nabla f_i(w_{i-1}^k), \quad \quad i = 1,2,\cdots, n,
\end{equation}
with the convention that $w_0^{k+1} = w_n^k$. 
\hide{
Using the update relation~\ref{eq:update}, for each $k \geq 1$, we can write down the relation between the outer cycle as
\begin{equation}
w_0^{k+1}=w_0^k-\alpha_k \sum_{i=1}^n \nabla f_i(w_{i-1}^k),
\end{equation}
where $\sum_{i=1}^n \nabla f_i(w_{i-1}^k)$ is the aggregated component
gradients and serves as an approximation to the full gradient
$\nabla f(w_0^k) = \sum_{i=1}^n \nabla f_i (w_0^k)$.
}
%
Note that for a closed and convex subset $\mathcal{W}$ of $\mathbb{R}^d$, the results can be projected onto $\mathcal{W}$, and the update rule becomes 
\begin{equation}
w_{i}^k = P_{\mathcal{W}}(w_{i-1}^k - \alpha_k \nabla f_i(w_{i-1}^k)), \quad \quad i = 1,2,\cdots, n,
\end{equation}
where $P_\mathcal{W}$ denotes projection on the set $\mathcal{W} \subset \mathbb{R}^d$. 
\hide{
Moreover, for non-differentiable functions $f_i$, we can use subgradients and the method becomes incremental subgradient with update rule 
\begin{equation}
w_{i}^k = P_\mathcal{W}\big(w_{i-1}^k - \alpha_k g_{i}^k f_i(w_{i-1}^k) \big),
\end{equation}
where $g_{i}^k \in \partial f_i(w_{i-1}^k)$ and $\partial f_i(w_{i-1}^k)$ denotes the subdifferential (set of all subgradients) of $f_i$ at $w_{i-1}^k$.
}

IG with diminishing stepsizes converges at rate $\mathcal{O}(1/\sqrt{k})$ for strongly convex sum function \citep{nedic2001convergence}. If in addition to the strong convexity of the sum function, every component function $f_i$ is smooth, IG with diminishing stepsizes $\alpha_k=\Theta({1}/{k^s}), s\in(0,1]$ converges at rate $\mathcal{O}(1/k^s)$ \citep{gurbuzbalaban2015random}.

The convergence rate analysis of IG is valid regardless of order of processing the elements.
However, in practice, the convergence rate of IG is known to be quite sensitive to the order of processing the functions \citep{bertsekas2015convex,gurbuzbalaban2017convergence}. If problem-specific knowledge can be used to find a favorable order $\sigma$ (defined as a permutation $\{\sigma_1, \cdots, \sigma_n \}$ of $\{1,2,...,n\}$), IG can be updated to process the functions according to this order, i.e.,
\begin{equation}
w_{i}^k = w_{i-1}^k -
\alpha_k \nabla f_{\sigma_i}(w_{i-1}^k), \quad \quad i = 1,2,\cdots, n.
\end{equation}
In general a favorable order is not known in advance.
A common approach is sampling the function indices with replacement from the set $\{1, 2, \cdots , n\}$, which is called the
Stochastic Gradient Descent (SGD) method.


\subsection{Convergence Rate of IG on \alg Subsets}\label{sec:rate}
Next we analyze the convergence rate of IG applied to the weighted subset $S$ found by \alg. 
Note that \alg finds $S$ by greedily minimizing (\ref{problem2}) (or maximizing (\ref{dual})). Therefore, $S$ is a near-optimal solution of problem (\ref{eq:error}) and
estimates the full gradient by an error of at most $\epsilon$, i.e., $\max_{w\in\mathcal{W}}\| \sum_{i\in V} \nabla f_i(w) - \sum_{j \in S} \gamma_{j} \nabla f_{j}(w) \| \leq \epsilon$. 

Here, we show that (1) applying IG to $S$ converges to a close neighborhood of the optimal solution and that (2) this convergence happens at the same rate (same number of epochs) as IG on the full data. 
Formally, every step of IG on the subset becomes
\begin{align}
w_{i}^k = w_{i-1}^k - \alpha_k \gamma_{s_{\sigma_i}} \nabla &f_{s_{\sigma_i}}(w_{i-1}^k), \quad  i = 1,2,\cdots, r, \nonumber\\
&\quad\quad s_i \in S, \quad |S|=r.
\end{align}
Here, $\sigma$ is a permutation of $\{1,2,\cdots, r\}$, and the per-element stepsize $\gamma_{s_i}$ for every function $f_{s_i}$ is the weight of the element $s_i \in S$ and is fixed for all epochs. 

\subsection{Convergence for Strongly Convex \boldmath{$f$}}
We first provide the convergence analysis for the case where the 
function $f$ in Problem (\ref{eq:problem}) is strongly convex, i.e. $\forall w,w'\in \mathbb{R}^d$ we have $f(w) \geq f(w') + \langle \nabla f(w'), w-w' \rangle + \frac{\mu}{2} \| w'-w \|^2$. 

%

\begin{theorem} \label{thm:strong}
	Assume that $f$ is strongly convex, and $S$ is a weighted subset of size $r$ obtained by \alg 
	that estimates the full gradient by an error of at most $\epsilon$, i.e., $\max_{w\in\mathcal{W}}\| \sum_{i\in V} \nabla f_i(w) - \sum_{j \in S} \gamma_{j} \nabla f_{j}(w) \| \leq \epsilon$. 
	Then for the iterates $\{w_k\!=\!w_0^k\}$ generated by applying IG to $S$ with per-epoch stepsize $\alpha_k=\alpha/k^\tau$ with $\alpha>0$ and $\tau\in[0,1]$, we have 
	\begin{itemize}
		\item[(i)] if $\tau=1$, then $\| w_k - \opt \|^2  \!\leq\! 2\epsilon R/\mu^2 \!+ \alpha r^2\gamma_{\max}^2C^2\!/k\mu$,
		\item[(ii)] if $0< \tau < 1$, then $\| w_k - \opt \|^2 \!\leq\! 2\epsilon R/\mu^2$, \hfill for $k \rightarrow \infty$
		\item[(iii)] if $\tau=0$, then $\| w_k - \opt \|^2 \leq (1-\alpha \mu)^{k+1} \| w_{0} - \opt\|^2  + 2\epsilon R/\mu^2 + \alpha r^2 \gamma_{\max}^2 C^2/\mu,$
	\end{itemize}
	where $C$ is an upper-bound on the norm of the component function gradients, i.e. $\max_{i \in V} \sup_{w \in \mathcal{W}} \| \nabla f_i(w) \| \leq C$, $\gamma_{\max} = \max_{j \in S} \gamma_j$ is the largest per-element step size, and $R=\min\{d_0, (r\gamma_{\max}C + \epsilon)/\mu \}$, where $d_0=\| w_0 - w_*\|$ is the initial distance to the optimum $\opt$. 
\end{theorem}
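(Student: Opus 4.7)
The plan is to reduce one epoch of IG-on-$S$ to a single approximate gradient step on $f := \sum_{i\in V} f_i$, derive a deterministic one-step recurrence for $\|w_k - \opt\|^2$ (with $w_k := w_0^k$), and then specialize to each of the three stepsize regimes. The two ingredients that carry all the novelty are (a) the CRAIG pointwise bound $\|\sum_{j\in S}\gamma_j \nabla f_j(w) - \nabla f(w)\| \leq L(S) \leq \epsilon$ from Eq.~(5), and (b) the radius estimate $R$.

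First I would expand $w_{k+1} = w_k - \alpha_k g_k$ where $g_k := \sum_{i=1}^r \gamma_{s_{\sigma_i}} \nabla f_{s_{\sigma_i}}(w_{i-1}^k)$, giving
$$\|w_{k+1}-\opt\|^2 = \|w_k-\opt\|^2 - 2\alpha_k\langle g_k, w_k-\opt\rangle + \alpha_k^2\|g_k\|^2,$$
and bound $\|g_k\| \leq r\gamma_{\max}C$ from the gradient-norm hypothesis. I would then split $g_k = \nabla f(w_k) + e_k^{\mathrm{C}} + e_k^{\mathrm{D}}$, where $e_k^{\mathrm{C}} := \sum_{j\in S}\gamma_j\nabla f_j(w_k) - \nabla f(w_k)$ is the CRAIG approximation error with $\|e_k^{\mathrm{C}}\|\leq \epsilon$, and $e_k^{\mathrm{D}} := \sum_i \gamma_{s_{\sigma_i}}[\nabla f_{s_{\sigma_i}}(w_{i-1}^k)-\nabla f_{s_{\sigma_i}}(w_k)]$ is the usual intra-epoch IG drift. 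Strong convexity gives $\langle \nabla f(w_k), w_k-\opt\rangle \geq \mu\|w_k-\opt\|^2$; the CRAIG piece contributes at most $2\alpha_k\epsilon\|w_k-\opt\|$ by Cauchy--Schwarz; and for the drift piece I would use the classical IG trick of passing to function values (via convexity of each $f_{s_{\sigma_i}}$) together with $\|w_{i-1}^k - w_k\|\leq \alpha_k(i-1)\gamma_{\max}C$, so that the drift is eventually absorbed into an $\alpha_k^2\, r^2\gamma_{\max}^2 C^2$ noise term.

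For the iterate radius I would combine two bounds: inductively the iterates cannot escape the initial ball, giving $\|w_k-\opt\|\leq d_0$; and strong convexity forces $\mu\|w_k-\opt\|\leq \|\nabla f(w_k)\| \leq r\gamma_{\max}C + \epsilon$ (since $\|g_k\|\leq r\gamma_{\max}C$ and $\|\nabla f - g_k\|\leq \epsilon$ pointwise), giving $\|w_k-\opt\|\leq (r\gamma_{\max}C+\epsilon)/\mu$. Taking the minimum yields $R$. Substituting $\|w_k-\opt\|\leq R$ in the cross term produces the master recursion
$$\|w_{k+1}-\opt\|^2 \leq (1-2\alpha_k\mu)\|w_k-\opt\|^2 + 2\alpha_k\epsilon R + \alpha_k^2\, r^2\gamma_{\max}^2C^2.$$
From here the three cases are standard: for $\tau=0$ I unroll the geometric recursion to extract the transient $(1-\alpha\mu)^{k+1}d_0^2$ plus steady-state residuals $2\epsilon R/\mu$ and $\alpha r^2\gamma_{\max}^2 C^2/\mu$; for $\tau=1$ I prove the ansatz $\|w_k-\opt\|^2 \leq A/k + B$ inductively with $B=2\epsilon R/\mu$ and $A=r^2\gamma_{\max}^2 C^2/\mu$; for $\tau\in(0,1)$ I use $\alpha_k\to 0$ so the $\alpha_k^2$ noise vanishes asymptotically and the fixed point of the recurrence is $2\epsilon R/\mu$.

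The hard part will be the drift term $e_k^{\mathrm{D}}$ in the absence of smoothness: only the crude bound $\|\nabla f_j\|\leq C$ is available, so I cannot turn the displacement $\|w_{i-1}^k-w_k\|=O(\alpha_k)$ into a gradient perturbation directly. The remedy is the Bertsekas--Nedi\'c function-value detour (using convexity of each $f_j$ to convert the cross term into $f(w_k)-f(\opt)$ plus an $O(\alpha_k)$ Lipschitz slack that, multiplied by the outer $\alpha_k$, becomes $O(\alpha_k^2)$). Matching the resulting constants against $r^2\gamma_{\max}^2 C^2$ in each of the three regimes, and verifying that the extra CRAIG cross-term $2\alpha_k\epsilon R$ cleanly produces the $2\epsilon R/\mu$ residual through the Chung-type induction at $\tau=1$, is the bookkeeping-heavy but essentially routine step.
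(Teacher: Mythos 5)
Your proposal is correct and follows essentially the same route as the paper's proof: an epoch-level recursion for $\|w_k-\opt\|^2$, the intra-epoch drift handled by the convexity/function-value detour with $\|w_{i-1}^k-w_k\|\leq\alpha_k (i-1)\gamma_{\max}C$ so it folds into the $\alpha_k^2 r^2\gamma_{\max}^2C^2$ term, the radius $R=\min\{d_0,(r\gamma_{\max}C+\epsilon)/\mu\}$ controlling the $2\alpha_k\epsilon R$ cross term, and a Chung-type lemma (resp.\ geometric unrolling) for $\tau\in(0,1]$ (resp.\ $\tau=0$). The only cosmetic deviation is that you invoke strong convexity of $f$ directly at $w_k$ via $\langle\nabla f(w_k),w_k-\opt\rangle\geq\mu\|w_k-\opt\|^2$, whereas the paper lower-bounds $\sum_{j\in S}(f_j(w_k)-f_j(\opt))$ at $\opt$ and uses $\|\sum_{j\in S}\gamma_j\nabla f_j(\opt)\|\leq\epsilon$ (since $\nabla f(\opt)=0$); both yield the same recursion up to the constant in front of $\alpha_k\mu$.
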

All the proofs can be found in the Appendix.
The above theorem shows that IG on $S$ converges at the same rate $\mathcal{O}(1/\sqrt{k})$ of IG on the entire data set $V$. However, compared to IG on $V$, the $|V|/|S|$ speedup of IG on $S$ comes at the price of getting an extra error term,  $2\epsilon R/\mu^2$.


\begin{figure*}[t]
    \centering
    \subfloat[SGD \label{subfig:}]{
    \includegraphics[height=.35\textwidth,trim=10mm 0 10mm 0]{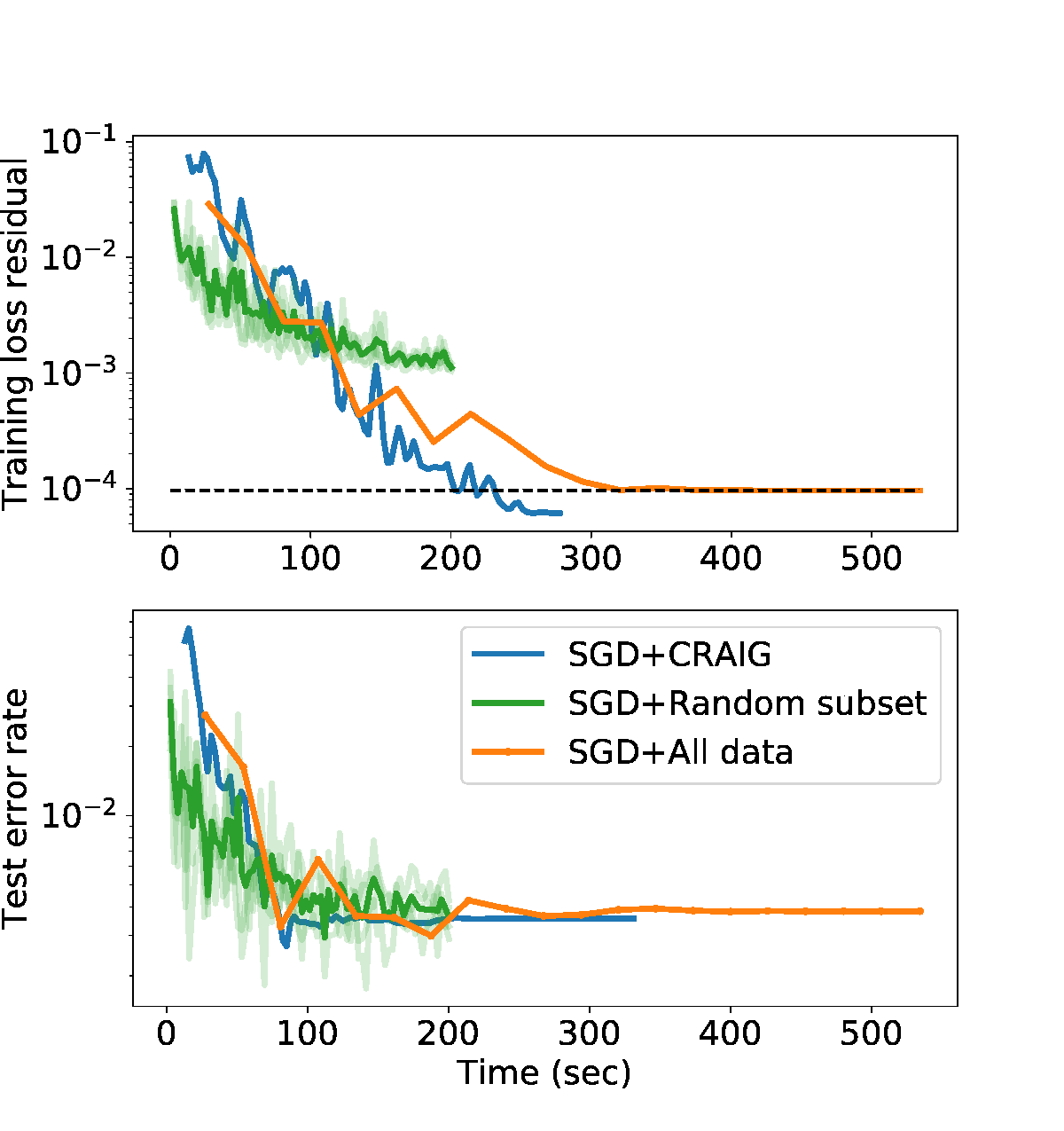}
    }\hspace{1mm}
    \subfloat[SAGA \label{subfig:}]{
	\includegraphics[height=.35\textwidth,trim=10mm 0 10mm 0]{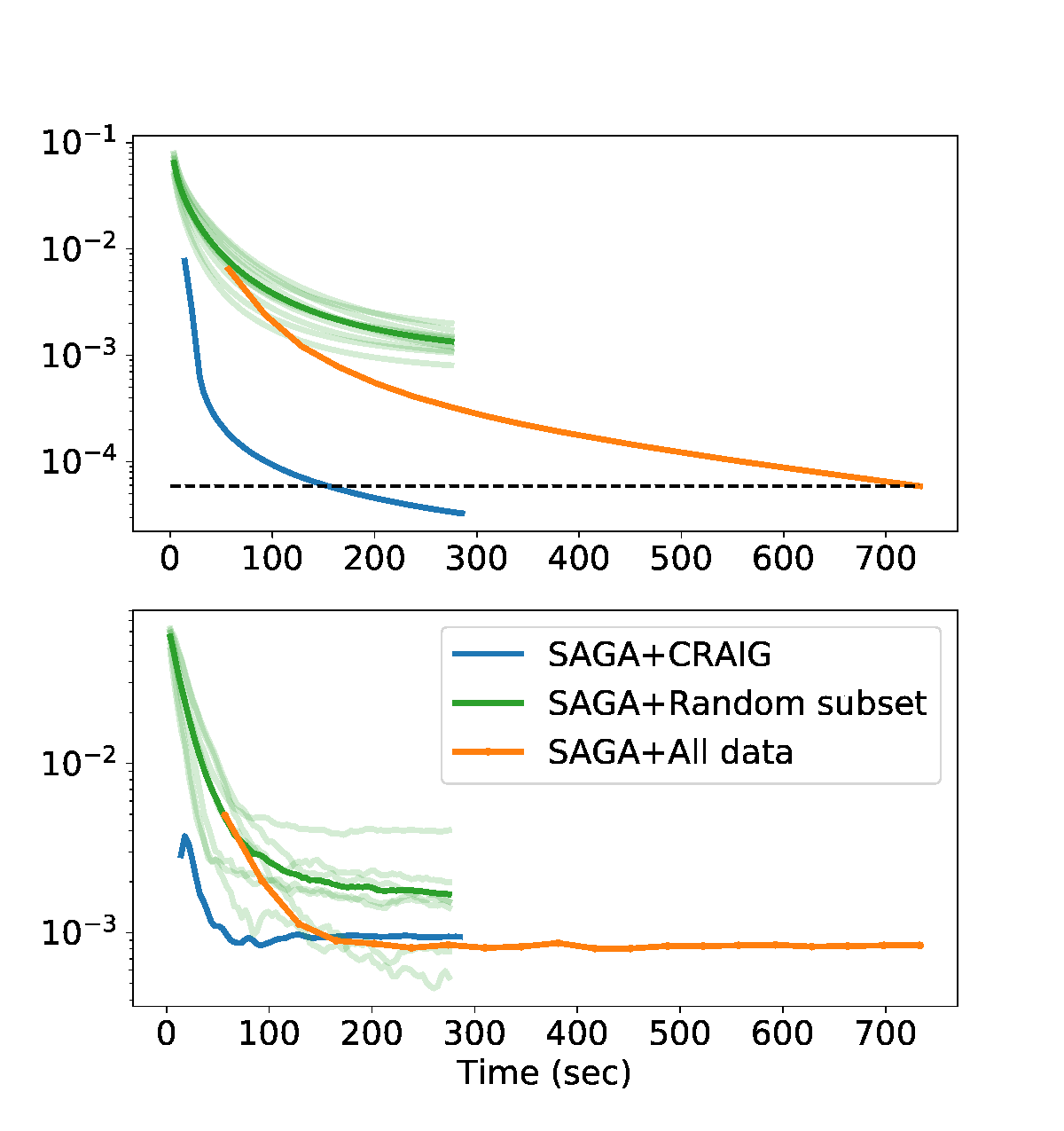}
	}\hspace{1mm}
    \subfloat[SVRG \label{subfig:}]{
	\includegraphics[height=.35\textwidth,trim=10mm 0 10mm 0]{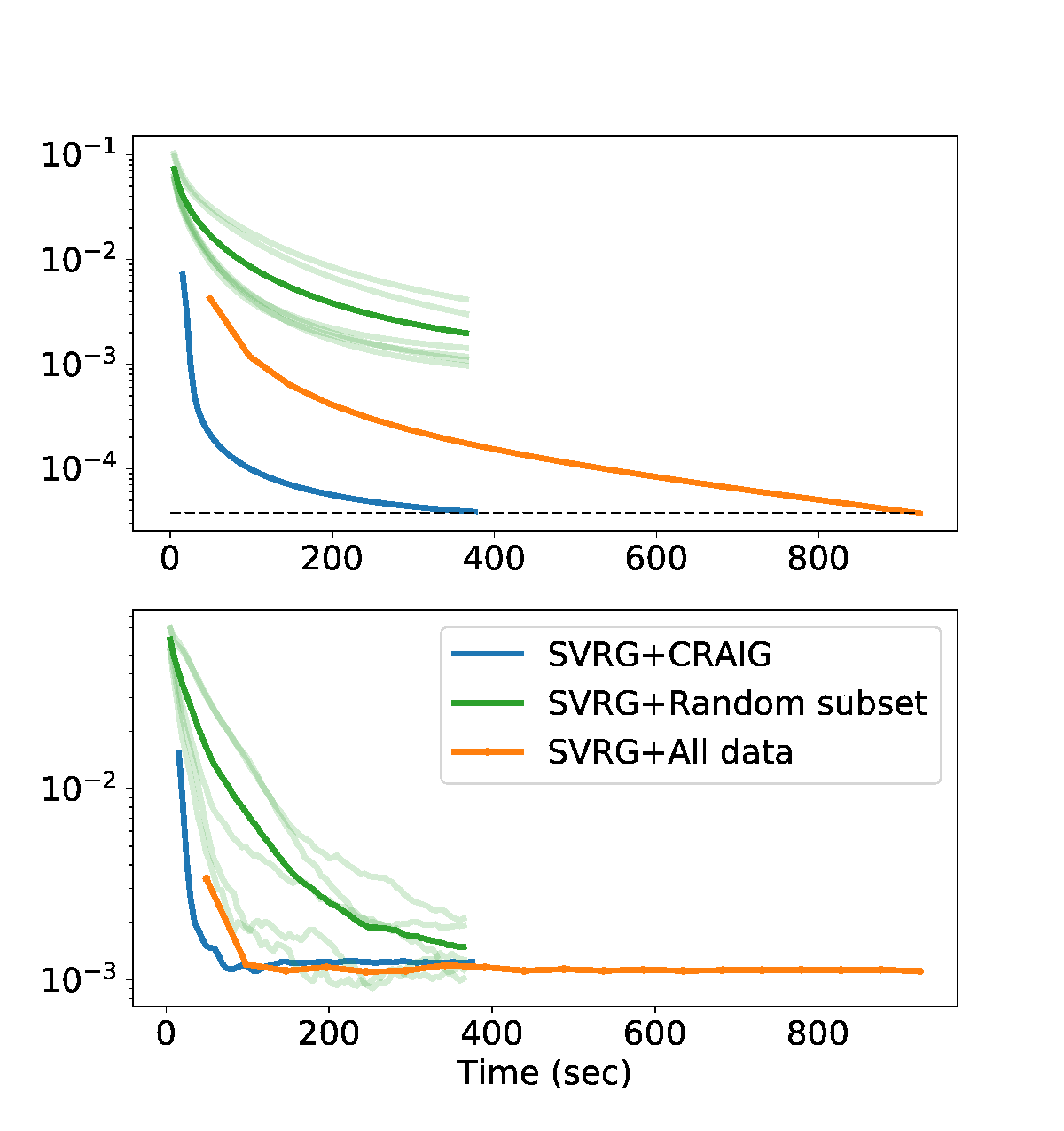}
	}
	\vspace{-3mm}
    \caption{{Loss residual and error rate of SGD, SVRG, SAGA for Logistic Regression on Covtype data set with 581,012 data points. We compare \alg (10\% selected subset) (blue) vs. 10\% random subset (green) vs. entire data set (orange). \alg gives the average speedup of 3x for achieving similar loss residual and error rate across the three optimization methods.}}
    \label{fig:var_sgd}
    \vspace{-3mm}
\end{figure*}

\subsection{Convergence for Smooth and Strongly Convex \boldmath{$f$}}
If in addition to strong convexity of the expected risk, each component function has a Lipschitz gradient, i.e. $ \forall w \in \mathcal{W},  i \in [n]$ we have $\| \nabla f_i(w) - \nabla f_i(w') \| \leq\! \beta_i \| w-w'\|$, then we get the following results about the iterates generated by applying IG to the weighted subset $S$ returned by \alg.

\begin{theorem}\label{thm:smooth}
	Assume that $f$ is strongly convex and let
	 $f_i(w), i=1,2,\cdots, n$ be convex and twice continuously differentiable component functions with Lipschitz gradients on $\mathcal{W}$. 
    Supposed that $S$ is a weighted subset of size $r$ obtained by \alg that estimates the full gradient by an error of at most $\epsilon$, i.e., $\max_{w\in\mathcal{W}}\| \sum_{i\in V} \nabla f_i(w) - \sum_{j \in S} \gamma_{j} \nabla f_{j}(w) \| \leq \epsilon$. 
	 Then for the iterates $\{w_k=w_0^k\}$ generated by applying IG to $S$ with per-epoch stepsize $\alpha_k=\alpha/k^\tau$ with $\alpha>0$ and $\tau\in[0,1]$, we have 
	\begin{itemize}
		\item[(i)] if $\tau=1$, then $\| w_k - \opt \|  \leq 2\epsilon/\mu + \alpha\beta Cr\gamma_{\max}^2/k\mu$,
		\item[(ii)] if $0< \tau < 1$, then $\| w_k - \opt \| \leq 2\epsilon/\mu$, \hfill for $k \rightarrow \infty$
		\item[(iii)] if $\tau=0$, then $\| w_k - \opt \|  \leq (1-\alpha \mu)^k \| w_0 - \opt \| + 2\epsilon/\mu + \alpha \beta C r \gamma_{\max}^2 / \mu$,
	\end{itemize}
	where $\beta = \sum_{i=1}^n \beta_i$ is the sum of gradient Lipschitz constants of the component functions.
\end{theorem}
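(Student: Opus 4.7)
The plan is to mirror the proof of Theorem \ref{thm:strong} but to exploit smoothness to obtain an extra factor of $\alpha_k$ in the per-epoch error, which turns the $\mathcal{O}(1/\sqrt{k})$ distance bound into $\mathcal{O}(1/k^\tau)$ and removes the factor $R$. First I would unroll one full epoch of IG on $S$:
\begin{equation}
w_0^{k+1} \;=\; w_0^k \;-\; \alpha_k \sum_{i=1}^{r} \gamma_{s_{\sigma_i}} \nabla f_{s_{\sigma_i}}(w_{i-1}^k),
\end{equation}
and split the inner sum into its value at the epoch's starting iterate plus a perturbation, writing $\sum_i \gamma_{s_{\sigma_i}} \nabla f_{s_{\sigma_i}}(w_{i-1}^k) = \sum_i \gamma_{s_{\sigma_i}} \nabla f_{s_{\sigma_i}}(w_0^k) + \delta_k$, with $\delta_k = \sum_i \gamma_{s_{\sigma_i}}\bigl[\nabla f_{s_{\sigma_i}}(w_{i-1}^k) - \nabla f_{s_{\sigma_i}}(w_0^k)\bigr]$.

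Next I would control the two pieces separately. The first is close to the true full gradient at $w_0^k$: by the CRAIG guarantee $L(S) \le \epsilon$ combined with inequality~(\ref{eq:upper}), $\|\sum_i \gamma_{s_{\sigma_i}} \nabla f_{s_{\sigma_i}}(w_0^k) - \nabla f(w_0^k)\| \le \epsilon$. The perturbation $\delta_k$ is where smoothness buys us something new: since $\|\nabla f_j(w)\|\le C$ the inner iterates satisfy $\|w_{i-1}^k - w_0^k\| \le \alpha_k C (i-1)\gamma_{\max} \le \alpha_k C r\gamma_{\max}$, so by Lipschitz gradients
\begin{equation}
\|\delta_k\| \;\le\; \gamma_{\max}\sum_{i=1}^r \beta_{s_{\sigma_i}} \alpha_k C r \gamma_{\max} \;\le\; \alpha_k\, \beta\, C\, r\, \gamma_{\max}^2.
\end{equation}
Substituting back, the IG update on $S$ is a perturbed full-gradient step at $w_0^k$, $w_0^{k+1} = w_0^k - \alpha_k \nabla f(w_0^k) + \alpha_k e_k$ with $\|e_k\|\le \epsilon + \alpha_k\beta C r\gamma_{\max}^2$.

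Now I would invoke strong convexity of $f$. For $k$ large enough that $\alpha_k\le 1/\beta_{\text{tot}}$, the exact gradient step is a contraction in $\|\cdot\|$: $\|w_0^k - \alpha_k \nabla f(w_0^k) - w_*\| \le (1-\alpha_k\mu)\|w_0^k - w_*\|$. Combining with the triangle inequality yields the recursion
\begin{equation}
\|w_0^{k+1}-w_*\| \;\le\; (1-\alpha_k \mu)\,\|w_0^k-w_*\| \;+\; \alpha_k \epsilon \;+\; \alpha_k^2 \beta C r \gamma_{\max}^2.
\end{equation}
The three cases now follow by standard tools: for $\tau=0$ (constant $\alpha$), iterating the recursion gives geometric decay plus the fixed neighborhood $\epsilon/\mu + \alpha \beta C r\gamma_{\max}^2/\mu$; for $\tau=1$ (harmonic $\alpha/k$), a standard telescoping/induction argument using $\sum \alpha_k = \infty$, $\sum \alpha_k^2 < \infty$ gives the $\mathcal{O}(1/k)$ rate to the neighborhood $2\epsilon/\mu$; for $0<\tau<1$ the $\alpha_k^2$ contribution is $o(\alpha_k)$ and vanishes in the limit, leaving only the $2\epsilon/\mu$ term.

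The main obstacle I anticipate is matching the exact constants claimed in the statement, particularly the factor $2$ in $2\epsilon/\mu$ and the precise form $\beta C r \gamma_{\max}^2/(k\mu)$ in case (i). The cleanest route is probably to follow the Lyapunov-style induction of \cite{gurbuzbalaban2015random} but applied to the perturbed recursion above, carrying the two error sources ($\epsilon$ and $\alpha_k \beta C r\gamma_{\max}^2$) separately so that the gradient-norm-type term inherits the extra $\alpha_k$ and decays at the optimization rate while the CRAIG approximation error contributes the irreducible $2\epsilon/\mu$ neighborhood.
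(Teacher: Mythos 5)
Your proposal is correct and follows essentially the same route as the paper's proof: both reduce one epoch of IG on $S$ to a perturbed full-gradient step at $w_0^k$, bound the within-epoch drift by $\alpha_k \beta C r \gamma_{\max}^2$ via the Lipschitz gradients and the bound $C$ on gradient norms, use $\mu I \preceq A_k \preceq \beta I$ for the averaged Hessian (equivalently, your contraction of the exact gradient step, which for twice continuously differentiable $f$ is proved by the same Hessian-integral identity) to reach the recursion $\| w_{k+1}-\opt\| \le (1-\alpha_k\mu)\| w_k - \opt\| + O(\alpha_k\epsilon) + \alpha_k^2\beta C r\gamma_{\max}^2$, and finish with the Chung-type lemma for $0<\tau\le 1$ and geometric summation for $\tau=0$. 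The factor of $2$ you flag as a potential obstacle is not one: the paper's $2\alpha_k\epsilon$ arises only because it routes the coreset error through the operator difference $(A_k-A_k^r)(w_k-\opt)$, which collects one $\epsilon$ from the gradient mismatch at $w_k$ and a second from $\|\sum_{j\in S}\gamma_j\nabla f_j(\opt)\|\le\epsilon$, whereas your direct substitution of the gradient error at $w_0^k$ yields only $\alpha_k\epsilon$ and hence a conclusion at least as strong as the stated $2\epsilon/\mu$.
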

The above theorem shows that for $\tau>0$, IG applied to $S$ converges to a $2\epsilon/\mu$ neighborhood of the optimal solution, with a rate of $\mathcal{O}(1/k^\tau)$ which is the same convergence rate for IG on the entire data set $V$.
As shown in our experiments, in real data sets small weighted subsets constructed by \alg provide a close approximation to the full gradient. Hence, applying IG to the weighted subsets returned by \alg provides a solution of the same or higher quality compared to the solution obtained by applying IG to the whole data set, in a considerably shorter amount of time.


\section{Experiments}\label{sec:experiments}

In our experimental evaluation we wish to address the following questions: (1) How do loss and accuracy of IG applied to the subsets returned by \alg compare to loss and accuracy of IG applied to the entire data? (2) How small is the size of the subsets that we can select with \alg and still get a comparable performance to IG applied to the entire data? 
And (3) How well does \alg scale to large data sets, and extends to non-convex problems? 
%
In our experiments, we report the run-time as the wall-clock time for subset selection with \alg, plus minimizing the loss using IG or other optimizers with the specified learning rates. For the classification problems, we separately select subsets from each class while maintaining the class ratios in the whole data, and apply IG to the union of the subsets. 
{
Note that the upper bounds on the gradient differences derived in Appendix \ref{app:grad_bound}
only hold for points with similar labels. Thus, theoretically we need to select subsets separately. For neural networks, we observed that separately selecting subsets from each class helps the performance.
}
We separately tune each method so that it performs at its best.


\subsection{Convex Experiments} \vspace{-2mm}
In our convex experiments, we apply \alg to SGD, as well as SVRG \citep{johnson2013accelerating}, and SAGA \citep{defazio2014saga}.
We apply L2-regularized logistic regression: $f_i(x) = \ln(1 + \exp(-w^T x_i y_i)) + 0.5 \lambda w^Tw$ to classify the following two datasets from LIBSVM:
(1) {\em covtype.binary} including 581,012 data points of 54 dimensions, and 
(2) {\em Ijcnn1} including 49,990 training and 91,701 test data points of 22 dimensions.
As {\em covtype} does not come with labeled test data, we randomly split the training data into halves to make the training/test split (training and set sets are consistent for different methods). 

For the convex experiments, 
we tuned the learning rate for each method (including the random baseline) by preferring smaller training loss from a large number of parameter combinations for two types of learning scheduling: exponential decay $\alpha_k = \alpha_0 b^{k}$ and $k$-inverse $\alpha_k = \alpha_0 (1 + bk)^{-1}$ with parameters $\alpha_0$ and $b$ to adjust.
{
For convergence of IG to $2\epsilon/\mu$ neighborhood of the optimal solution, we require that $\sum_{k=0}^{\infty} \alpha_k=\infty$, and $\sum_{k=0}^{\infty} \alpha^2_k=0$ \cite{nedic2001convergence}. Hence, while the convergence of exponentially decaying learning rate is not theoretically guaranteed, it often worked better in our experiments.
}
Furthermore, following \cite{johnson2013accelerating} we set $\lambda$  to $10^{-5}$.

\begin{figure}
    \centering
	\vspace{-5mm}
    \subfloat[Covtype \label{subfig:covtype_eps}]{
	\includegraphics[width=.24\textwidth,trim=10mm 0 10mm 0]{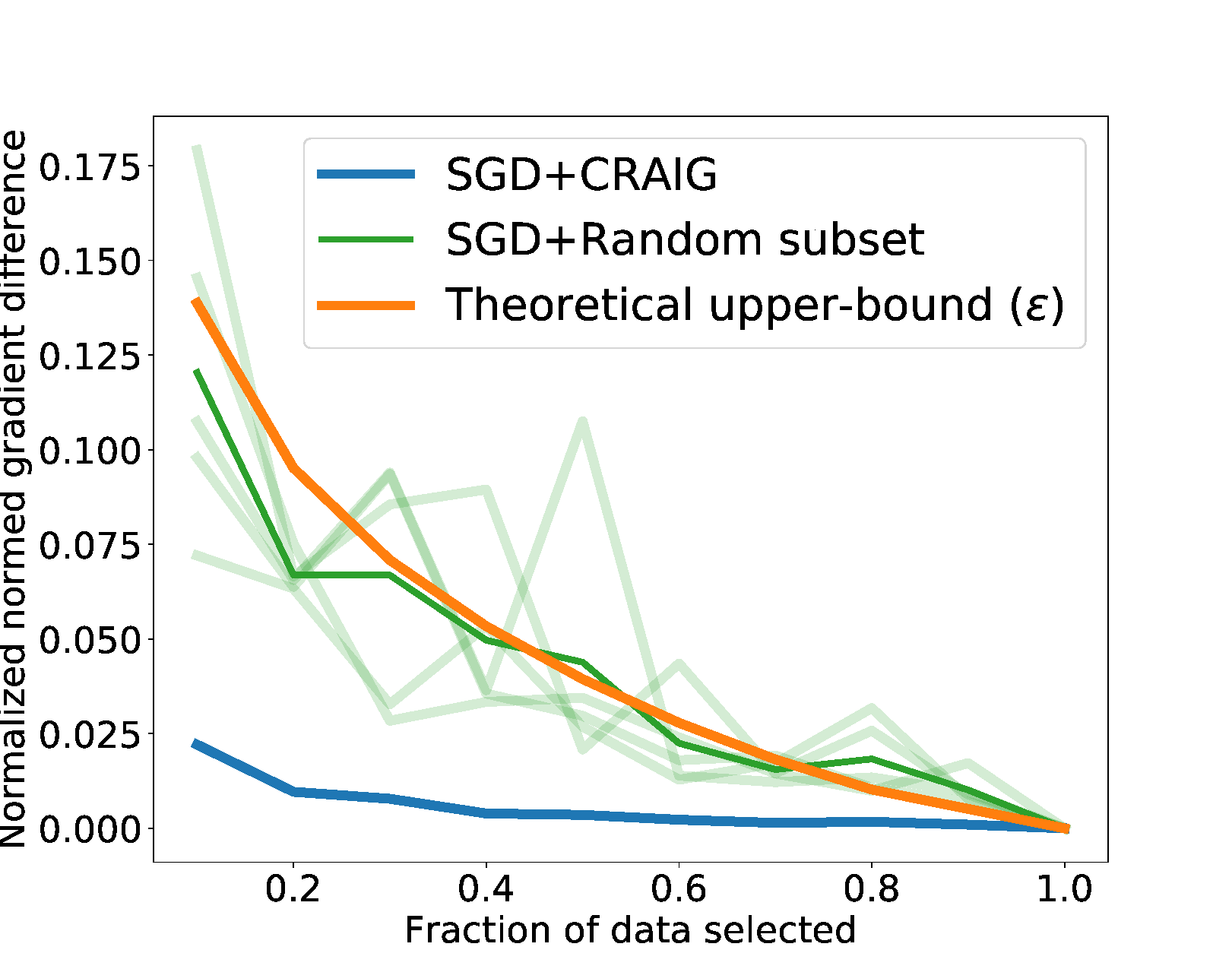}
    }
    \subfloat[Ijcnn1 \label{subfig:ijcnn1_eps}]{
    \includegraphics[width=.24\textwidth,trim=10mm 0 10mm 0]{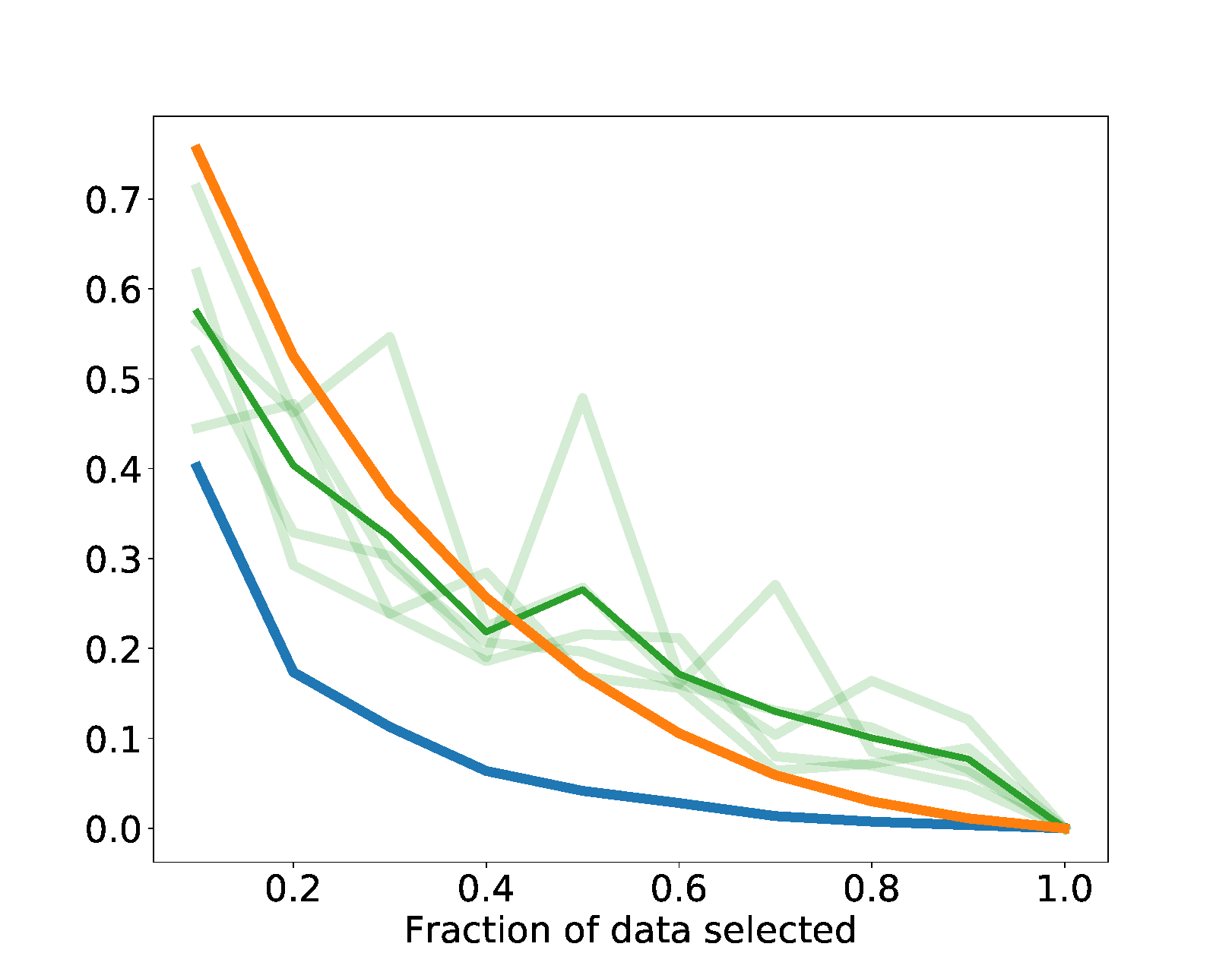}
    }
    \caption{Normed difference between the full gradient, the gradient of the subset found by \alg (Eq. \ref{eq:error}), and the theoretical upper-bound $\epsilon$ (Eq. \ref{eq:fl_min}). The values are normalized by the largest full gradient norm. The transparent green lines demonstrate various random subsets, and the opaque green line shows their average.
    }
    \vspace{-5mm}
    \label{fig:grad_diff}
\end{figure}
\begin{figure}
    \centering
    \vspace{-2mm}
    \includegraphics[width=.28\textwidth,trim=10mm 0 10mm 0]{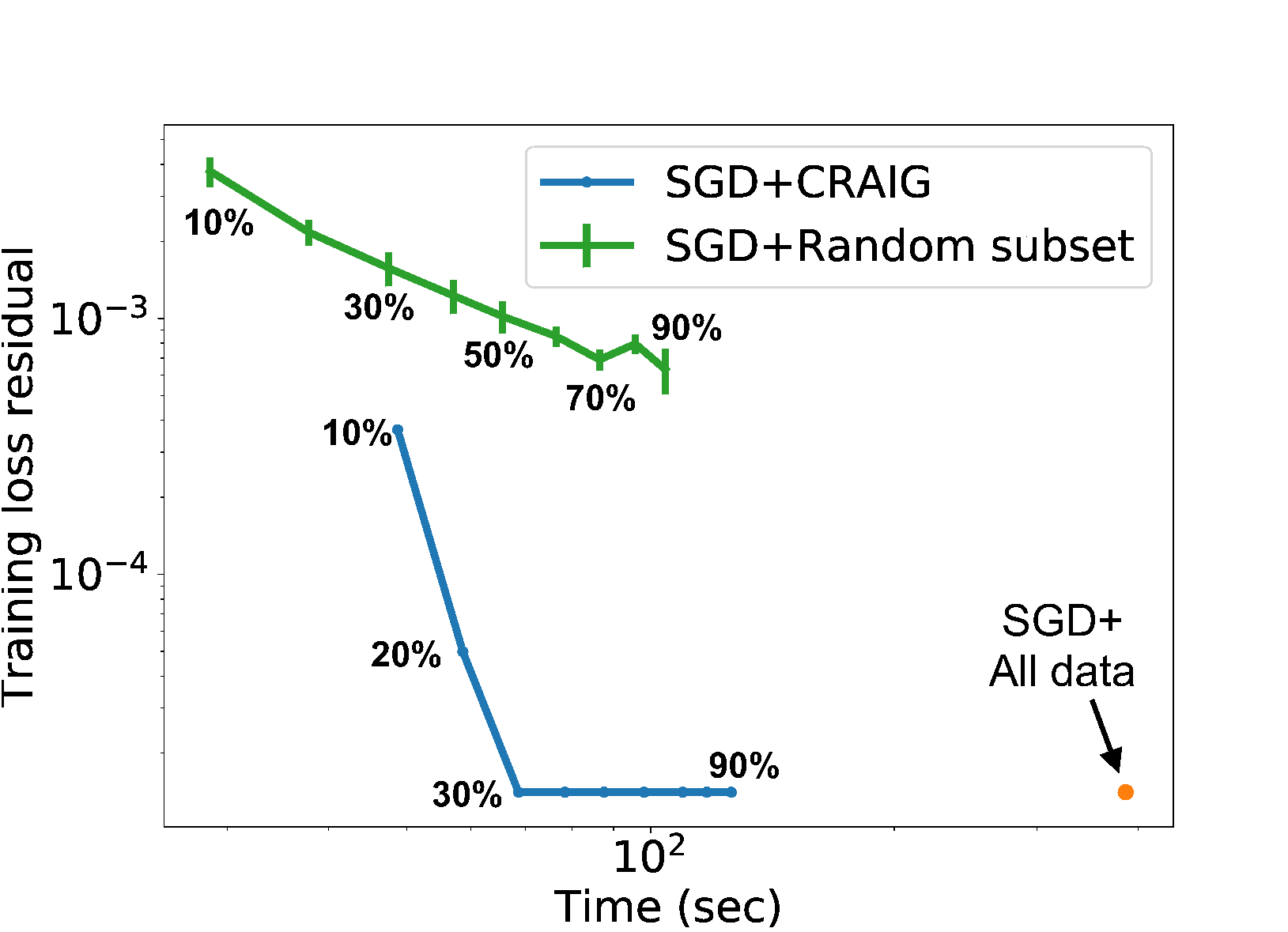}
    \caption{{
    Training loss residual for SGD applied to subsets of size $10\%, 20\%, \cdots, 90\%$ found by \alg vs. random subsets of the same size from Ijcnn1. We get 5.6x speedup from applying SGD to subset of size 30\% compared to the entire dataset.
    }}
    \label{fig:speedup}
    \vspace{-4mm}
\end{figure}

\hide{
\begin{figure}
    \centering
    \includegraphics[width=.5\textwidth]{Fig/mnist_rebut.png}
    \caption{Training loss and test accuracy for SGD applied to full MNIST vs. subsets of size 60\% selected by \alg and random subsets of size 60\%. Both the random subsets and the subsets found by \alg change at the beginning of every epoch.}
    \label{fig:my_label}
\end{figure}
}

\hide{
\begin{figure}
    \centering
	\subfloat[MNIST \label{subfig:mnist}]{
    \includegraphics[width=.1475\textwidth,trim=10mm 0 10mm 0]{}}\hspace{8mm}
    \subfloat[CIFAR10 \label{subfig:cifar}]{
	\includegraphics[width=.73\textwidth,trim=10mm 0 10mm 0]{}
	}
    \caption{{Training loss and test accuracy of \alg applied to (a) SGD on MNIST with a 1-layer neural network, and (b) SGD, Adam, Adagrad, NAG, on CIFAR-10 with ResNet-56.
    \alg provides 2x to 3x speedup and a better generalization performance.}
    \vspace{-1mm}}
    \label{fig:cifar_mnist}
\end{figure}

}

\xhdr{\alg effectively minimizes the loss}
Figure~\ref{fig:var_sgd}(top) compares training loss residual of SGD, SVRG, and SAGA on the 10\% \alg set (blue), 10\% random set (green), and the full dataset (orange). \alg effectively minimizes the training data loss (blue line) and achieves the same minimum as the entire dataset training (orange line) but much faster.
Also notice that training on the random 10\% subset of the data does not effectively minimize the training loss.


\begin{figure}
	\centering
	\includegraphics[width=.46\textwidth,trim=10mm 0 10mm 0]{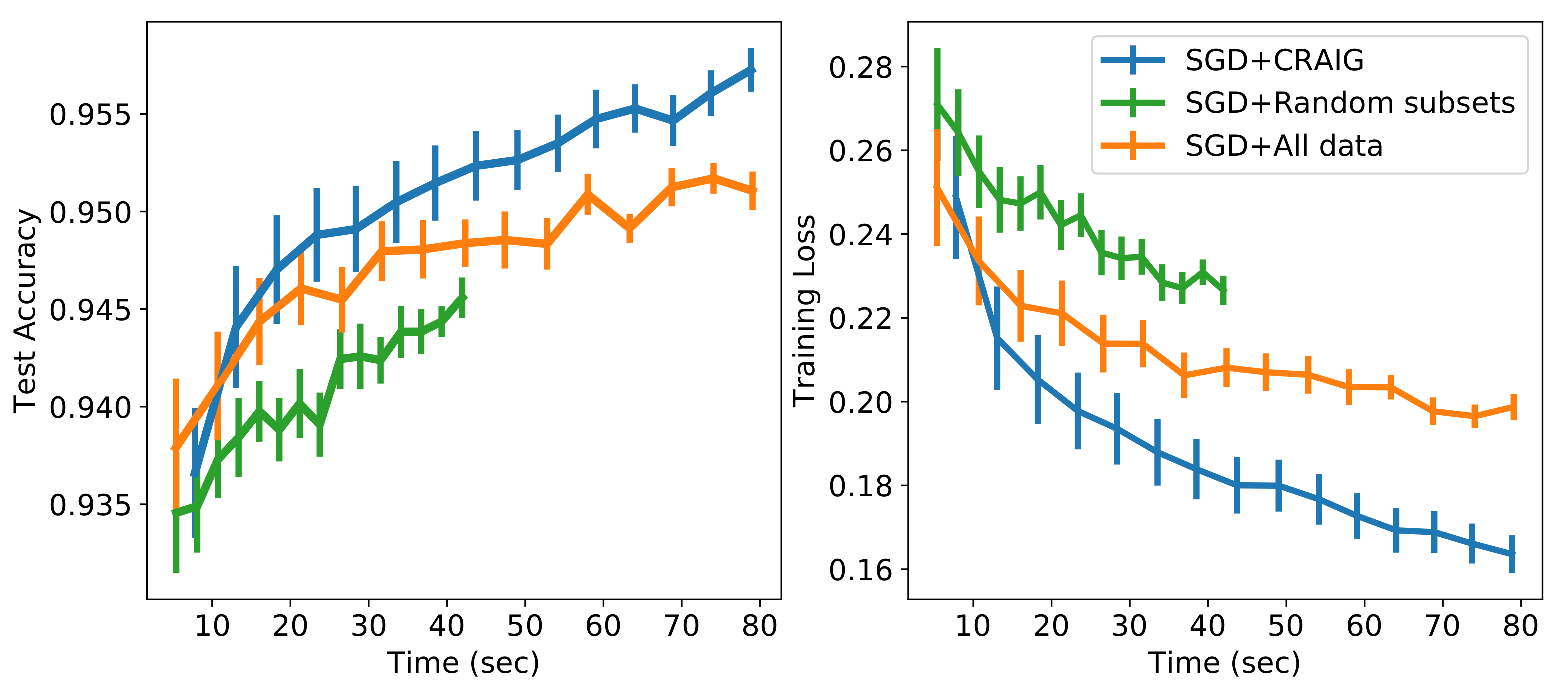}
	\caption{{Test accuracy and training loss of SGD applied to subsets found by \alg vs. random subsets on MNIST with a 2-layer neural network.
			\alg provides 2x to 3x speedup and a better generalization performance. 
	}}
	\vspace{-3mm}
	\label{fig:mnist}
\end{figure}  

\xhdr{\alg has a good generalization performance}
Figure~\ref{fig:var_sgd}(bottom) shows the test error rate of models trained on \alg vs. random vs. the full data. Notice that training on \alg subsets achieves the same generalization performance (test error rate) as training on the full data.

\xhdr{\alg achieves significant speedup}
Figure~\ref{fig:var_sgd} also shows that \alg achieves a similar training loss (top) and test error rate (bottom) as training on the entire set, but much faster. In particular, we obtain a speedup of 2.75x, 4.5x, 2.5x from applying IG, SVRG and SAGA on the subsets of size 10\% from covtype obtained by \alg.
Furthermore, Figure \ref{fig:speedup} compares the speedup achieved by \alg to reach a similar loss residual as that of SGD 
for subsets of size $10\%, 20\%, \cdots, 90\%$ of Ijcnn1. We get a 5.6x speedup by applying SGD to subsets of size 30\% obtained by \alg.

\xhdr{\alg gradient approximation is accurate}
Figure~\ref{fig:grad_diff} demonstrates the norm of the difference between the weighted gradient of the subset found by \alg and the full gradient compared to the theoretical upper-bound $\epsilon$ specified in Eq. (\ref{eq:fl_min}). 
The gradient difference is calculated by sampling the full gradient at various points in the parameter space. 
Gradient differences are then normalized by the largest norm of the sampled full gradients.
The figure also compares the normed gradient difference between gradients of several random subsets $S$ where each data point is weighted by $|V|/|S|$. Notice that \alg's gradient estimate is more accurate than the gradient estimate obtained by the same-size random subset of points (which is how methods like SGD approximate the gradient). This demonstrates that our gradient approximation in Eq. (\ref{eq:fl_min}) is reliable in practice.

\hide{
\xhdr{\alg benefits from ordering}
Finally, Figure \ref{fig:order} shows the loss residual vs time for IG when it processes the elements of the subsets according to the ordering obtained by \alg compared to random permutations of the same subsets. We observe that the greedy ordering significantly improves the rate of convergence of IG.
}

\subsection{Non-convex Experiments}\vspace{-2mm}
Our non-convex experiments involve applying \alg to train the following two neural networks: 
(1) Our smaller network is a fully-connected hidden layer of 100 nodes and ten softmax output nodes; sigmoid activation and L2 regularization with  $\lambda= 10^{-4}$ and mini-batches of size 10 on MNIST dataset of handwritten digits containing 60,000 training and 10,000 test images. 
(2) Our large neural network is ResNet-20 for CIFAR10 with convolution, average pooling and dense layers with softmax outputs and L2 regularization with $\lambda= 10^{-4}$. CIFAR 10 includes 50,000 training and 10,000 test images from 10 classes, and we used mini-batches of size 128. 
Both MNIST and CIFAR10 data sets are normalized into [0, 1] by division with 255.
In all these experiments, we report average test accuracy across 10 trials.

\begin{figure}
\centering    
    \subfloat[\label{subfig:lag1}]{
	\includegraphics[width=.22\textwidth,trim=10mm 0 10mm 0]{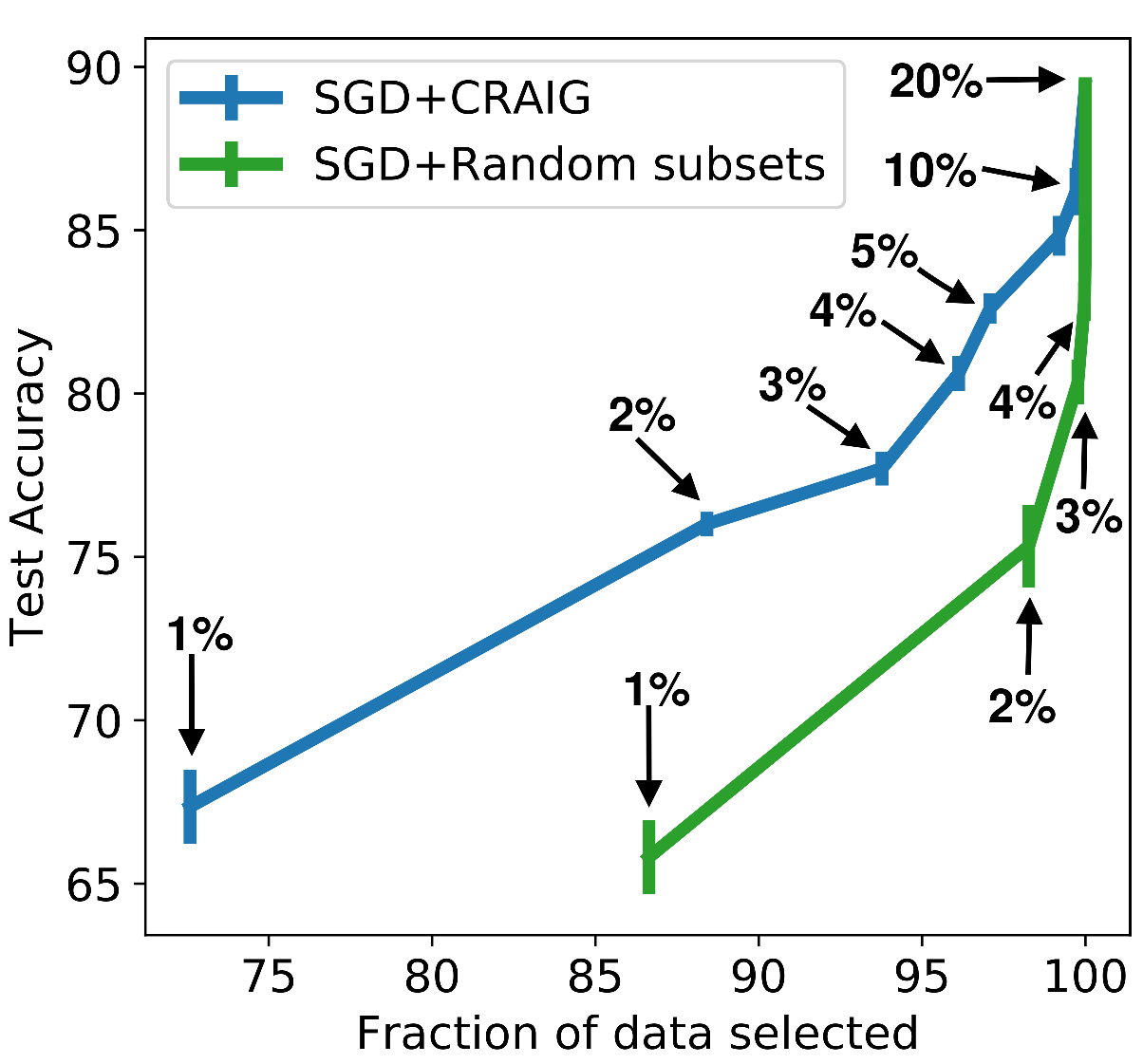}}\hspace{5mm}
	\subfloat[\label{subfig:lag5}]{
	\includegraphics[width=.21\textwidth,trim=10mm 0 10mm 0]{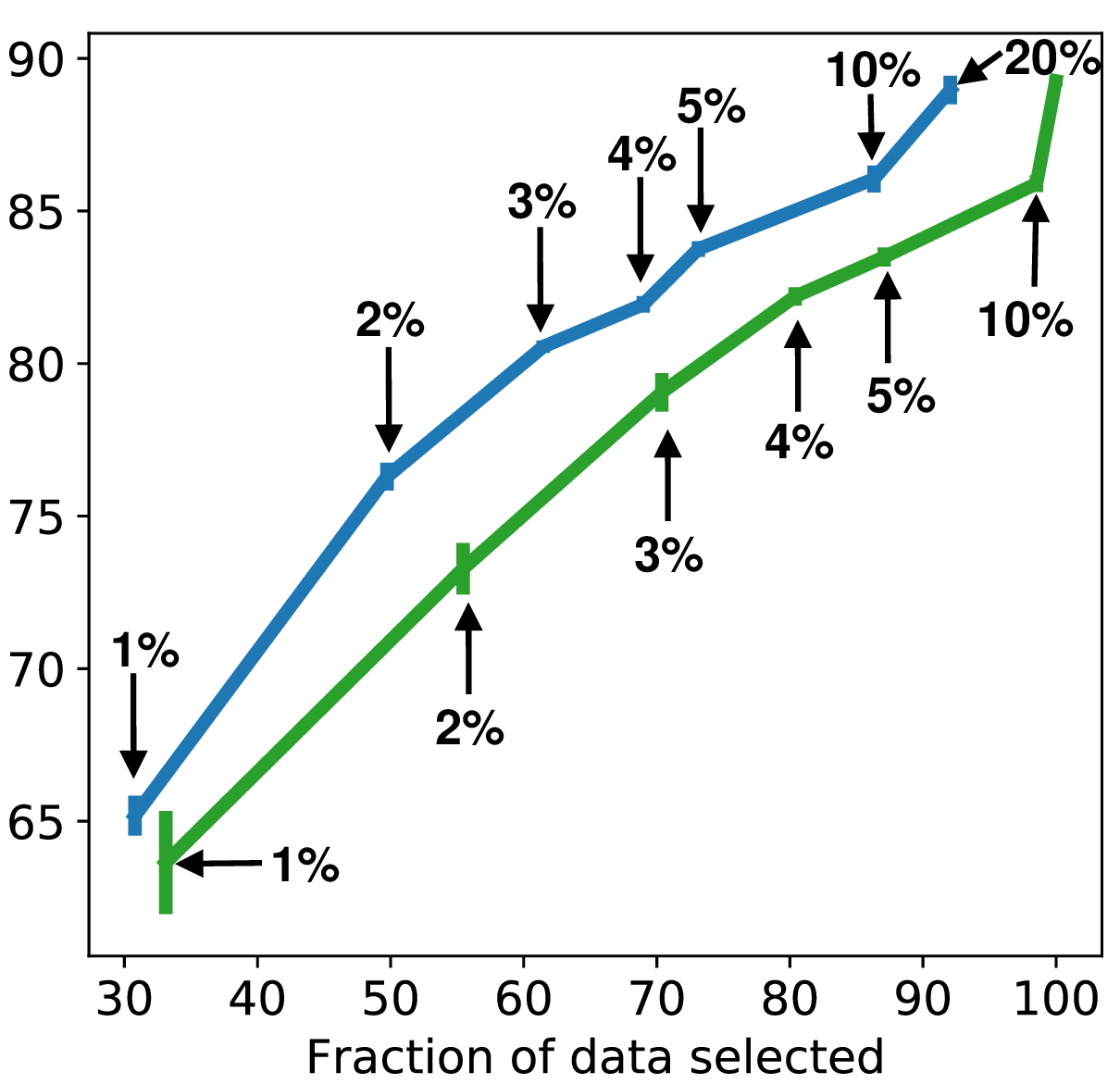}}
  \vspace{-1mm}
  \caption{{Test accuracy vs. fraction of data selected during training of ResNet-20 on CIFAR10. 
  (a) At the beginning of ever epoch, a new subset of size 1\%, 2\%, 3\%, 4\%, 5\%, 10\%, or 20\% is selected by \alg. 
  (b) Every 5 epochs a new subset of similar size is selected by \alg. 
  SGD 
  is then applied to training on the selected subsets. 
  The x-axis shows the fraction of training data points that are used by SGD 
  during the training process. 
  Note that for a given subset size, backpropagation is done on the same number of data points for \alg and random. However, \alg selects a smaller number of distinct data points during the training. Therefore, \alg is data-efficient for training neural networks.
  }}
  \label{fig:cifar}
\vspace{-3mm}
\end{figure}  

\xhdr{\alg achieves considerable speedup}
Figure~\ref{fig:mnist} shows training loss and test accuracy for training a 2-layer neural net on MNIST.  For this problem, we used a constant learning rate of $10^{-2}$. 
Here, we apply \alg to select a subset of 50\% of the data at the beginning of every epoch and train only on the selected subset with the corresponding per-element stepsizes.
Interestingly, in addition to achieving a speedup of 2x to 3x for training the network, the subsets selected by \alg provide a better generalization performance compared to models trained on the entire dataset.

\xhdr{\alg is data-efficient for training neural networks}
Figure~\ref{fig:cifar} shows test accuracy vs. the fraction of data points selected for training ResNet-20 on CIFAR10.
We trained the network for 200 epochs, and used the standard learning rate schedule for training ResNet-20 on CIFAR10. I.e., we start with initial learning rate of 0.1, and exponentially decay the learning rate by a factor of 0.1 at epochs 100 and 150. 
To prevent weights from diverging when training with subsets of all sizes, we used linear learning rate warm-up for 20 epochs from 0.
For optimization we used SGD with a momentum of 0.9. 

\begin{figure}
    \centering
    \subfloat[First\label{subfig:epoch1}]{
    \includegraphics[height=.36\textwidth,trim=10mm 0 10mm 0]{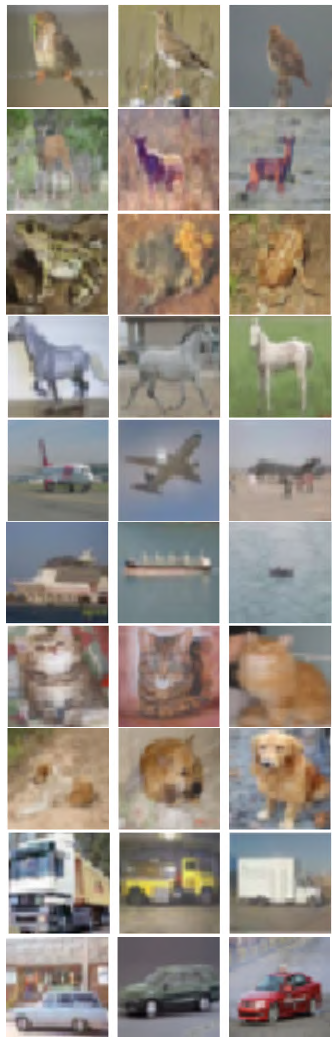}}\hspace{1cm}
    \subfloat[Middle \label{subfig:epoch100}]{
    \includegraphics[height=.36\textwidth,trim=10mm 0 10mm 0]{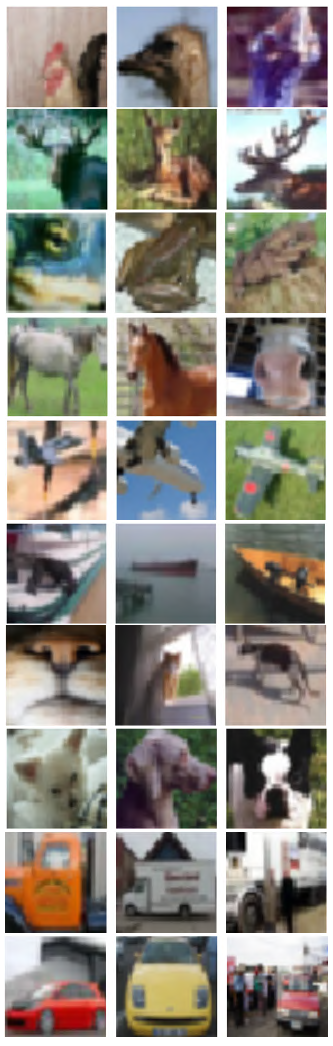}}\hspace{1cm}
    \subfloat[Last \label{subfig:epoch200}]{
    \includegraphics[height=.36\textwidth,trim=10mm 0 10mm 0]{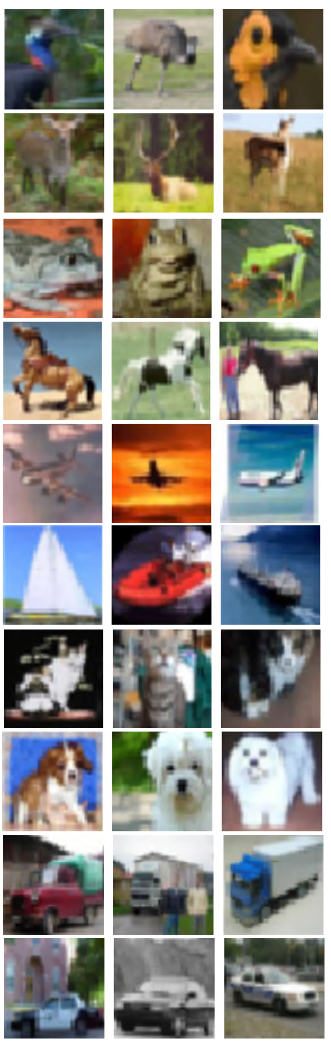}}
    \vspace{-3mm}
    \caption{A subset of images selected by \alg from CIFAR10. Subsets are selected at the (a) beginning of training (epoch 1), (b) middle of training (epoch 100), and (c) end of training (epoch 200). 
    We notice that during the training, the semantic redundancies decrease considerably, and coreset images better represent various types of images (that are more difficult to learn) in every class. 
    }
    \vspace{-3mm}
    \label{fig:semantic}
\end{figure}

Figure \ref{subfig:lag1} shows the test accuracy when at the beginning of every epoch a subset of size 1\%, 2\%, 3\%, 4\%, 5\%, 10\%, or 20\% is chosen at random or by \alg from the training data. 
The network is trained 
only on the selected subset of a given size for that epoch. 
For every subset size, the x-axis shows the fraction of training data points that are used during the entire training process.
Figure \ref{subfig:lag5} shows the test accuracy when a subset of size 1\%, 2\%, 3\%, 4\%, 5\%, 10\%, or 20\% is chosen at random or by \alg every 5 epochs. The network is trained only on the selected subset for 5 epochs.
Generally, larger subsets or more frequent updates lead to more data exposure and hence better performance.
However, since in deep networks the gradients may change rapidly after a small number of parameter updates \cite{defazio2019ineffectiveness}, selecting smaller subsets and more frequent updates result in a larger improvement over the random baseline.
Note that for a given subset size, backpropagation is done on the same number of data points for \alg and random.
However, it can be seen that \alg can identify the data points that are effective for training the neural network, and hence achieves a superior test accuracy by training on a smaller fraction of the training data. 

\xhdr{Insights from \alg subsets}
Figure~\ref{fig:semantic} shows a subset of images selected by \alg for training CIFAR10 at the beginning (\ref{subfig:epoch1}), middle (\ref{subfig:epoch100}), and end (\ref{subfig:epoch1}) of training. 
Since gradients are more uniformly distributed at initialization, subsets contain semantic redundancies at the beginning of the training (\ref{subfig:epoch1}). 
We notice that during the training, semantic redundancies decrease considerably.
In particular, as training proceeds coreset images represent groups of images that are more difficult to learn, e.g., contain parts of an object (\ref{subfig:epoch100}), or have a different foreground/background color than the rest of the images in a class (\ref{subfig:epoch200}).

\section{Conclusion}\vspace{-2mm}
We developed a method, \alg, for selecting a subset (coreset) of data points with their corresponding per-element stepsizes to speed up iterative gradient (IG) methods. In particular, we showed that weighted subsets that minimize the upper-bound on the estimation error of the full gradient, maximize a submodular facility location function.
Hence, the subset can be found
using a fast greedy algorithm. We proved that IG on subsets $S$ found by \alg converges at the same rate as IG on the entire dataset $V$, while providing a $|V|/|S|$ speedup. In our experiments, we showed that various IG methods, including SGD, SAGA, and SVRG 
runs up to 6x faster on convex and up to 3x on non-convex problems on subsets found by \alg while achieving practically the same training loss and test error. 

\section*{Acknowledgement}
This work was supported in part by the SNSF P2EZP2\_172187, and the CONIX Research Center, one of six centers in JUMP, a Semiconductor Research Corporation (SRC) program sponsored by DARPA.
We also gratefully acknowledge the support of
DARPA under Nos. FA865018C7880 (ASED), N660011924033 (MCS);
ARO under Nos. W911NF-16-1-0342 (MURI), W911NF-16-1-0171 (DURIP);
NSF under Nos. OAC-1835598 (CINES), OAC-1934578 (HDR), CCF-1918940 (Expeditions), IIS-2030477 (RAPID);
Stanford Data Science Initiative, 
Wu Tsai Neurosciences Institute,
Chan Zuckerberg Biohub,
Amazon, Boeing, Chase, Docomo, Hitachi, Huawei, JD.com, NVIDIA, Dell. 
J. L. is a Chan Zuckerberg Biohub investigator.

\bibliography{fast}
\bibliographystyle{icml2020}

\appendix
\onecolumn

\section{Convergence Rate Analysis}
We firs proof the following Lemma which is an extension of the [\cite{chung1954stochastic}, Lemma 4].
\begin{lemma}\label{lemma:seq}
	Let $u_k \geq 0$ be a sequence of real numbers. Assume there exist $k_0$ such that $$u_{k+1} \leq (1-\frac{c}{k}) u_k + \frac{e}{k^{p}} + \frac{d}{k^{p+1}}, \quad \forall k \geq k_0,$$
	where $e > 0, d > 0, c > 0$ are given real numbers.
	Then
	\begin{align}
	u_k &\leq (dk^{-1}+e)(c-p+1)^{-1} k^{-p+1} + o(k^{-p+1} )  &\text{for}~ c>p-1, p \geq 1 \label{eq:cnv1}\\
	u_k &= O(k^{-c} \log k)  &\text{for}~ c=p-1, p>1\\
	u_k &= O(k^{-c}) 		   &\text{for}~ c<p-1, p>1\\
	\end{align}
\end{lemma}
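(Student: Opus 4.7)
The plan is to unroll the recurrence and estimate the resulting product and residual sum asymptotically. Choosing $k_0$ large enough that every factor $1 - c/\ell$ is positive, iteration of the hypothesis from $k_0$ to $k-1$ gives
\[
u_k \;\leq\; \Pi(k_0,k)\, u_{k_0} \;+\; \sum_{j=k_0}^{k-1} \Pi(j+1,k)\!\left(\frac{e}{j^p} + \frac{d}{j^{p+1}}\right),
\]
where $\Pi(a,b) := \prod_{\ell=a}^{b-1}(1 - c/\ell)$. First I would control $\Pi$: using $\log(1 - c/\ell) = -c/\ell + O(1/\ell^{2})$ and $\sum_{\ell=a}^{b-1} \ell^{-1} = \log(b/a) + O(1/a)$ yields the two-sided estimate $\Pi(a,b) = \Theta\bigl((a/b)^{c}\bigr)$ uniformly in $a \geq k_0$, and in particular $\Pi(j+1,k) \leq C_{0} (j/k)^{c}$ with $C_{0}$ depending only on $c$ and $k_{0}$.

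Substituting these bounds separates the inequality into three pieces,
\[
u_k \;\leq\; C_{1}\, k^{-c} \;+\; C_{0} e\, k^{-c}\!\! \sum_{j=k_0}^{k-1} j^{c-p} \;+\; C_{0} d\, k^{-c}\!\! \sum_{j=k_0}^{k-1} j^{c-p-1},
\]
and the three regimes of the lemma correspond directly to the three regimes of these partial sums. In case (iii), $c - p < -1$, so both sums are bounded in $k$ and $u_{k} = O(k^{-c})$. In case (ii), $c = p - 1$, so the first sum is $\sim \log k$ while the second converges, giving $u_{k} = O(k^{-c} \log k)$. In case (i), $c - p > -1$, and a sharper Euler--Maclaurin pass yields $\sum_{j=k_0}^{k-1} j^{c-p} = k^{c-p+1}/(c-p+1) + o(k^{c-p+1})$, which produces the claimed leading term $e (c-p+1)^{-1} k^{-p+1} + o(k^{-p+1})$; the initial-data contribution $k^{-c} u_{k_0}$ is $o(k^{-p+1})$ since $c > p - 1$, and the $d$-sum contributes the $d (c-p+1)^{-1} k^{-p}$ correction reflected in the explicit factor $(d k^{-1} + e)$ of the stated bound, up to terms that fall inside $o(k^{-p+1})$.

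As a cleaner alternative for case (i), I would cross-check the bound by induction on $k$: guess $u_k \leq A k^{-(p-1)}$ with any $A > e/(c-p+1)$ and expand $(k+1)^{-(p-1)} = k^{-(p-1)} - (p-1) k^{-p} + O(k^{-(p+1)})$; the inductive step then reduces to the algebraic inequality $e \leq A(c - p + 1)$, which holds with a strictly positive margin that absorbs the $O(k^{-(p+1)})$ remainder for all $k$ sufficiently large, with the base case secured by taking $k$ beyond some $k_1$. The main obstacle is preserving the sharp constant $1/(c-p+1)$ in case (i): in the unrolling approach this requires the sharp asymptotic $\Pi(j+1,k) = (1+o(1))(j/k)^{c}$ rather than merely a $\Theta$-bound, together with the full Euler--Maclaurin expansion of the $j^{c-p}$ partial sum; the induction route sidesteps this entirely by matching coefficients at order $k^{-p}$. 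A secondary nuisance is the boundary behaviour of $\Pi(j+1,k)$ when $j$ is close to $k_{0}$, which I would handle by choosing $k_{0}$ so that $c/k_{0} < 1/2$, making the logarithmic expansion of $1 - c/\ell$ valid with a uniform error constant.
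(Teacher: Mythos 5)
Your proposal is correct, but it follows a genuinely different route from the paper. The paper proves the lemma in the style of Chung's classical argument: for case (i) it introduces the shifted, rescaled sequence $v_k = k^{p-1}u_k - \tfrac{d}{k(c-p+1)} - \tfrac{e}{c-p+1}$, shows via the Taylor expansion $(1+\tfrac1k)^{p-1} = 1 + \tfrac{p-1}{k} + o(\tfrac1k)$ that $v_{k+1} \leq v_k\bigl(1-\tfrac{c-p+1}{k}+o(\tfrac1k)\bigr) + O(\tfrac{1}{k^2})$, and concludes $\limsup_k v_k \leq 0$ from the divergence of $\sum_k \tfrac{c-p+1}{k}$ together with the forcing term being negligible relative to the decay; for cases (ii) and (iii) it sets $v_k = u_k(k+1)^c$, obtains $v_{k+1} \leq v_k + e'/k^{p-c}$, and sums. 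Your approach instead unrolls the recursion into a discrete variation-of-constants formula, estimates the propagator $\Pi(j+1,k)=\Theta((j/k)^c)$, and reads off the three regimes from the convergence, logarithmic divergence, or polynomial growth of $\sum_j j^{c-p}$; you then recover the sharp constant $e/(c-p+1)$ in case (i) by a separate induction on the ansatz $u_k \leq A k^{-(p-1)}$ with $A > e/(c-p+1)$. Both arguments are sound. Yours is more self-contained and makes the mechanism (where the mass of the sum concentrates) transparent, and you correctly identify the one delicate point of the unrolling route --- that a $\Theta$-bound on $\Pi$ loses the sharp constant --- and repair it with the induction, which matches coefficients at order $k^{-p}$ exactly as the paper's $v_k$-transformation does implicitly. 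The paper's transformation is shorter once one accepts the Chung-type limit criterion, but is terser about why $\limsup v_k \leq 0$ follows; your version spells out the same cancellation $e \leq A(c-p+1)$ explicitly. Note only that the $d k^{-1}$ factor in the stated bound contributes at order $k^{-p} = o(k^{-p+1})$, so your induction (which drops it into the error term) proves a statement equivalent to the paper's.
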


\begin{proof}
	Let $c > p-1$ and $v_k = k^{p-1} u_k - \frac{d}{k(c-p+1)} - \frac{e}{c-p+1}$. Then, using Taylor approximation $(1+\frac{1}{k})^p=(1+\frac{p}{k})+o(\frac{1}{k})$ we can write
	\begin{align}
	\hspace{-.6cm}
	v_{k+1} 
	&= (k+1)^{p-1} u_{k+1} - \frac{d}{(k+1)(c-p+1)} -  \frac{e}{c-p+1}\\
	& \leq k^{p-1} (1+\frac{1}{k})^{p-1} \Big( (1-\frac{c}{k}) u_k + \frac{e}{k^p} + \frac{d}{k^{p+1}} \Big) - \frac{d}{(k+1)(c-p+1)} -  \frac{e}{c-p+1}\\
	&= k^{p-1} u_k \Big( 1- \frac{c-p+1}{k} + o(\frac{1}{k}) \Big) 
	+ \frac{e}{k}\Big(1+\frac{p-1}{k} + o(\frac{1}{k})\Big)\\
	& ~~~~ + \frac{d}{k^2} \Big(1 + \frac{p-1}{k} + o(\frac{1}{k}) \Big) - \frac{d}{(k+1)(c-p+1)} -  \frac{e}{c-p+1}\\
	&= \Big(v_k + \frac{d}{k(c-p+1)} + \frac{e}{c-p+1}\Big) \Big(1-\frac{c-p+1}{k} + o(\frac{1}{k})\Big)  \\
	&~~~~ + \frac{e}{k}\Big(1+\frac{p-1}{k} + o(\frac{1}{k})\Big) + \frac{d}{k^2} \Big(1 + \frac{p-1}{k} + o(\frac{1}{k}) \Big)\\
	&~~~~ - \frac{d}{(k+1)(c-p+1)} -  \frac{e}{c-p+1}\\
	&= v_k\Big(1-\frac{c-p+1}{k} + o(\frac{1}{k})\Big) + \frac{d/(c-p+1)}{k(k+1)} + \frac{e(p-1)}{k^2} + \frac{d(p-1)}{k^3}+o(\frac{1}{k^2})
	\end{align}
	Note that for $v_k$, we have 
	$$\sum_{k=0}^\infty \Big(1-\frac{c-p+1}{k} + o(\frac{1}{k})\Big) = \infty$$ and $$\Big(\frac{d/(c-p+1)}{k(k+1)} + \frac{e(p-1)}{k^2} + \frac{d(p-1)}{k^3}+o(\frac{1}{k^2})\Big) \Big(1-\frac{c-p+1}{k} + o(\frac{1}{k})\Big)^{-1} \rightarrow 0.$$ Therefore,
	$\lim_{k \rightarrow \infty} v_k \leq 0$, and we get Eq. \ref{eq:cnv1}.
	For $p=1$, we have $u_k \leq \frac{e}{c}$. Hence, $u_k$ converges  into the region $u \leq  \frac{e}{c}$, with ratio $1-\frac{c}{k}$.
	
	Moreover, for $p-1 \geq c$ we have
	\begin{align}
	v_{k+1} &= u_{k+1} (k+1)^c \leq \Big[(1-\frac{c}{k})u_k+\frac{e}{k^p}+\frac{d}{k^{p+1}}\Big] k^c \Big( 1+\frac{c}{k} + \frac{c^2}{2k^2}+o(\frac{1}{k^2}) \Big)\\
	& =\Big( 1-\frac{c^2}{2k^2}+o(\frac{1}{k^2}) \Big)v_k + \frac{d}{k^{p-c+1}} \Big(1+ O(\frac{1}{k})\Big) + \frac{e}{k^{p-c}} \Big(1+\frac{c}{k} +O(\frac{1}{k^2})\Big)\\
	& \leq v_k 
	+ \frac{e'}{k^{p-c}} 
	\end{align}
	for sufficiently large $k$. Summing over $k$, we obtain that $v_k$ is bounded for $p-1>c$ (since the series $\sum_{k=1}^\infty (1/k^{\alpha})$ converges for $\alpha>1$) and $v_k=O(\log k)$ for $p=c+1$ (since $\sum_{i=1}^k (1/i) = O(\log k)$). 
\end{proof}
In addition, based on [\cite{chung1954stochastic}, Lemma 5] for $u_k \geq 0$, we can write
\begin{equation}
u_{k+1}\leq (1-\frac{c}{k^s})u_k + \frac{e}{k^p} + \frac{d}{k^t}, \quad\quad 0<s<1, s\leq p < t.
\end{equation}
Then, we have
\begin{equation}
u_k \leq \frac{e}{c} \frac{1}{k^{p-s}} + o(\frac{1}{k^{p-s}}).
\end{equation}

\subsection{Convergence Rate for Strongly Convex Functions}\label{app:thm1}
\hide{
\begin{lemma}\label{clm} 
	Assume $f_i$s are convex and there is a subset $S$ of size $r$ with corresponding per-element stepsizes $\{\gamma\}_j$ that estimates the full gradient by an error of at most $\epsilon$, i.e., $ \| \sum_{j \in S} \gamma_{j} \nabla f_j(w) - \sum_{i \in V} \nabla f_i(w) \| \leq \epsilon $. Then we have
\begin{align*}
    \sum_{j \in S} (f_j(w_{k})
    - f_j(\opt)) \geq
    \mu \| w_k - \opt \|^2 - \big(\epsilon + \sum_{i\in V} \nabla f_i(w_{k}) \big) \cdot  (w_k - \opt)
\end{align*}
\end{lemma}
\begin{proof}
From the convexity of each component we now that
\begin{equation}\label{eq:convf}
	\sum_{i \in V} (f_i(w_{k}) - f_i(\opt)) \leq \sum_{i\in V} \nabla f_i(w_{k}) \cdot  (w_k - \opt)
\end{equation}
Moreover, if $\sum_{j \in S} f_j(w_{k})$ is strongly convex, then we have
\begin{align}\label{eq:strf}
\sum_{j\in S} (f_j(w_k) - f_j(\opt)) 
&\geq \sum_{j\in S} \gamma_j \nabla f_j(\opt)(w_k-\opt) + \frac{\mu}{2} \| w_k - \opt \|^2
\end{align}

---->
Subtracting Eq. \ref{eq:convf} from Eq. \ref{eq:strf} and taking the norm of both sides, we get
\begin{align}
    |\sum_{j \in S} (f_j&(w_{k})
    - f_j(\opt)) - \sum_{i\in V} (f_i(w_k) - f_i(\opt))|  \nonumber\\
    &\geq 
    \| \sum_{j\in S} \gamma_j \nabla f_j(\opt)\cdot (w_k-\opt) + \frac{\mu}{2} \| w_k - \opt \|^2 
    - \sum_{i\in V} \nabla f_i(w_{k}) \cdot  (w_k - \opt)\| \\
    &= \| (\sum_{j\in S} \gamma_j \nabla f_j(\opt) - \sum_{i\in V} \nabla f_i(w_{k})) \cdot  (w_k - \opt) \|+ \frac{\mu}{2} \| w_k - \opt \|^2  \\
    &=
    \| \Big(\sum_{j\in S} \gamma_j \nabla f_j(\opt) - \sum_{i\in V} \nabla f_i(\opt)  -\big(\sum_{i\in V} \nabla f_i(w_k) - \sum_{i\in V} \nabla f_i(\opt)\big)\Big) \cdot (w_k-\opt)\| \nonumber\\
    & \qquad + \frac{\mu}{2} \| w_k - \opt \|^2 \\
    &\geq
    \| \big(\sum_{j\in S} \gamma_j \nabla f_j(\opt) - \sum_{i\in V} \nabla f_i(\opt)\big) \cdot (w_k-\opt)\| \nonumber\\ 
    & \qquad - \| \big(\sum_{i\in V} \nabla f_i(w_k) - \sum_{i\in V} \nabla f_i(\opt)\big) \cdot (w_k-\opt)\| + \frac{\mu}{2} \| w_k - \opt \|^2 \\
    &\geq
    \| \big(\sum_{j\in S} \gamma_j \nabla f_j(\opt) - \sum_{i\in V} \nabla f_i(\opt)\big) \cdot (w_k-\opt)\| \nonumber\\ 
    & \qquad - \| \big(\sum_{i\in V} \nabla f_i(w_k) - \sum_{i\in V} \nabla f_i(\opt)\big) \cdot (w_k-\opt)\| + \frac{\mu}{2} \| w_k - \opt \|^2
\end{align}

Moreover, if $\sum_{j \in S} f_j(w_{k})$ is strongly convex, then we have
\begin{align}
\sum_{j\in S} (f_j(w_k) - f_j(\opt)) 
&\geq \sum_{j\in S} \gamma_j \nabla f_j(\opt)(w_k-\opt) + \frac{\mu}{2} \| w_k - \opt \|^2 \\
&\geq (\sum_{i\in V}\nabla f_i(\opt)-\epsilon)(w_k-\opt) + \frac{\mu}{2} \| w_k - \opt \|^2 \\
&\geq -\epsilon \cdot (w_k-\opt) + \frac{\mu}{2} \| w_k - \opt \|^2.
\end{align}
Subtracting the above inequalities, we get
\begin{align}
    \sum_{j \in S} (f_j(w_{k})
    - f_j(\opt)) - \sum_{i\in V} (f_i(w_k) - f_i(\opt)) 
    \geq 
    \frac{\mu}{2} \| &w_k - \opt \|^2 -\epsilon \cdot (w_k-\opt) \nonumber \\
    &- \sum_{i\in V} \nabla f_i(w_{k}) \cdot  (w_k - \opt). 
\end{align}
Using strong convexity of $f$ and that $f(\opt)$ is optimum we have
\begin{align}
    \sum_{j \in S}  
    (f_j(w_{k}) - f_j(\opt)) 
    &\geq \sum_{i\in V} (f_i(w_k) - f_i(\opt)) +
    \frac{\mu}{2} \| w_k - \opt \|^2 \nonumber\\
    &\hspace{1cm} - \big(\epsilon + \sum_{i\in V} \nabla f_i(w_{k}) \big) \cdot  (w_k - \opt)\\
    &\geq \mu \| w_k - \opt \|^2 - \big(\epsilon + \sum_{i\in V} \nabla f_i(w_{k}) \big) \cdot  (w_k - \opt).
\end{align}
=====> before
From the convexity of each component we get the following inequalities
	\begin{eqnarray}
	\sum_{j \in S} (f_j(w_{k}) - f_j(\opt)) \leq \| \sum_{j\in S} \gamma_{j} \nabla f_j(w_{k}) \| \cdot \| w_k - \opt \|,\\
	\sum_{i\in V} (f_i(w_{k}) - f_i(\opt)) \leq \| \sum_{i\in V} \nabla f_i(w_{k}) \| \cdot \| w_k - \opt \|
	\end{eqnarray}
	By subtracting the above inequalities we get
	\begin{eqnarray}
	 \mid{\sum_{j \in S} \gamma_{j} (f_j(w_{k}) - f_j(\opt)) - \sum_{i\in V} (f_i(w_{k}) - f_i(\opt))}\mid 
	 &&\hspace{5cm} \nonumber\\
	&& \hspace{-4cm} \leq ( \| \sum_{j \in S} \gamma_{j} \nabla f_j(w_{k}) \| - \| \sum_{i\in V} \nabla f_i(w_{k}) \| ) \cdot \| w_k - \opt \|  \label{eq:app1}\\
	&&\hspace{-4cm}   \leq \| \sum_{j \in S} \gamma_{j} \nabla f_j(w_{k}) - \sum_{i \in V} \nabla f_i(w_{k}) \|  \cdot \| w_k - \opt \| \\
	&&\hspace{-4cm}  \leq \epsilon \| w_k - \opt \|
\end{eqnarray}

\end{proof}
}
\subsection*{Proof of Theorem \ref{thm:strong}}

We now provide the convergence rate for strongly convex functions building on the analysis of \cite{nedic2001convergence}. For non-smooth functions, gradients can be replaced by sub-gradients. 

Let $w_k = w_0^k$. For every IG update on subset $S$ we have
\begin{align}
\| w_{j}^k - \opt\|^2 
& =  \| w_{j-1}^k - \alpha_{k} \gamma_j \nabla f_{j}(w_{j-1}^k) - \opt\|^2 \\
& =  \| w_{j-1}^k - \opt\|^2 - 2\alpha_{k} \gamma_j \nabla f_j(w_{j-1}^k)(w_{j-1}^k - \opt) + \alpha_{k}^2 \| \gamma_j \nabla f_j(w_{j-1}^k)\|^2\\
& \leq  \| w_{j-1}^k - \opt\|^2 - 2\alpha_{k}  (f_j(w_{j-1}^k) - f_i(\opt)) + \alpha_{k}^2 \|\gamma_i \nabla f_j(w_{j-1}^k)\|^2.
\end{align}
Adding the above inequalities over elements of $S$ we get 
\begin{align}
\| w_{k+1} - \opt\|^2 
\leq \| w_{k} - \opt\|^2 &- 2\alpha_{k} \sum_{j\in S}  (f_i(w_{j-1}^k) - f_j(\opt)) +  \alpha_{k}^2 \sum_{j\in S} \| \gamma_j \nabla f_j(w_{j-1}^k)\|^2 \\
= \| w_{k} - \opt\|^2 &- 2\alpha_k \sum_{j\in S}  (f_j(w_k) - f_i(\opt)) \nonumber \\
& + 2 \alpha_k \sum_{j\in S}  (f_j(w_{j-1}^k) - f_j(w_k)) + \alpha_k^2 \sum_{j\in S} \|\gamma_j\nabla f_j(w_{j-1}^k)\|^2 
\end{align}
%
Using strong convexity 
we can write
\begin{align}
\| w_{k+1} - \opt\|^2 \leq \| w_{k} - \opt\|^2 
&- 2\alpha_k \big(\sum_{j \in S} \gamma_j \nabla f_j(\opt)\cdot (w_k-\opt) + \frac{\mu}{2}\|w_k-\opt\|^2 \big) \nonumber\\ 
& + 2 \alpha_{k}\sum_{j\in S} (f_j(w_{j-1}^k) - f_j(w_k)) + \alpha_{k}^2 \sum_{j \in S} \| \gamma_j \nabla f_j(w_{j-1}^k)\|^2 
\end{align}
\hide{
\begin{eqnarray}
\| w_{k+1} - \opt\|^2 
&\leq& \| w_{k} - \opt\|^2 - 2\alpha_k(f(w_k) - f_* - \epsilon \| w_k - \opt \| ) + \\ 
&& 2 \alpha_{k}\sum_{i\in S} \gamma_i(f_i(w_{i-1}^k) - f_i(w_k)) + \alpha_{k}^2 \sum_{i \in S} \| \gamma_i \nabla f_i(w_{i-1}^k)\|^2 
\end{eqnarray}
}
Using Cauchy–Schwarz inequality, we know
\begin{align}
    | \sum_{j \in S} \gamma_j \nabla f_j(\opt)\cdot (w_k -\opt) | \leq \| \sum_{j \in S} \gamma_j \nabla f_j(\opt) \| \cdot \| w_k - \opt \|.
\end{align}
Hence,
\begin{align}\label{eq:cs}
    -\sum_{j \in S} \gamma_j \nabla f_j(\opt)\cdot (w_k -\opt) \leq \| \sum_{j \in S} \gamma_j \nabla f_j(\opt) \| \cdot \| w_k - \opt \|.  
\end{align}
From reverse triangle inequality, and the facts that $S$ is chosen in a way that $\|\sum_{i \in V}\nabla f_i(\opt) - \sum_{j\in S}\gamma_j\nabla f_j(\opt)\| \leq \epsilon$, and that $\sum_{i\in V} \nabla f_i(\opt)=0$ 
we have $\| \sum_{j\in S}\gamma_j\nabla f_j(\opt)\| \leq \|\sum_{i \in V}\nabla f_i(\opt) \| + \epsilon = \epsilon$. Therefore 
\begin{align}
    \| \sum_{j \in S} \gamma_j \nabla f_j(\opt) \| \cdot \| w_k - \opt \| \leq \epsilon \cdot \|w_k-\opt\| 
\end{align}

For a continuously differentiable function, the following condition is implied by strong convexity condition
\begin{equation}\label{eq:xerror}
\| w_k - \opt \| \leq \frac{1}{\mu} \| \sum_{j\in S} \gamma_j \nabla f_j(w_k) \| .
\end{equation}
Assuming gradients have a bounded norm $\max_{\substack{x \in \mathcal{X},\\j \in V}} \|\nabla f_j(w)\| \leq C$, and the fact that $\sum_{j\in S}\gamma_j = n$ we can write
\begin{equation}
    \| \sum_{j\in S} \gamma_j \nabla f_j(w_k) \| \leq n\cdot C .
\end{equation}
Thus for initial distance $\| w_0 - \opt \|=d_0$, we have
\begin{equation}\label{eq:R}
    \| w_k - \opt \| \leq \min (n\cdot C, d_0) = R
\end{equation}
Putting Eq. \ref{eq:cs} to Eq. \ref{eq:R} together we get
\begin{align}
\| w_{k+1} - \opt\|^2 
\leq (1-\alpha_k \mu)
&\| w_{k} - \opt\|^2  
 + 2\alpha_k \epsilon R/\mu 
\nonumber\\
& + 2 \alpha_{k}\sum_{j\in S} (f_j(w_{j-1,k}) - f_j(w_k)) + \alpha_{k}^2 r\gamma_{\max}^2 C^2.
\end{align}
Now, from convexity of every $f_j$ for $j \in S$ we have that 
\begin{equation}
    f_j(w_k)-f_j(w_{j-1}^k) \leq \| \gamma_j \nabla f_j(w_k) \| \cdot \| w_{j-1}^k - w_k \|.
\end{equation}

In addition, incremental updates gives us
\begin{equation}
    \| w_{j-1}^k - w_k \| \leq \alpha_{k} \sum_{i=1}^{j-1} \|  \gamma_i \nabla f_{i}(w_{i-1}^k) \| \leq \alpha_k (j-1) \gamma_{\max} C.
\end{equation}
Therefore, we get
\begin{align}
2\alpha_k \sum_{j\in S} (f_j(w_k)-f_j(w_{j-1}^k)) &+ \alpha_k^2 r\gamma_{\max}^2 C^2 \nonumber\\
&\leq 2 \alpha_k \sum_{i=1}^r \gamma_{\max} C \cdot \alpha_k (j-1) \gamma_{\max} C + \alpha_k^2 r\gamma_{\max}^2 C^2 \\
&=  \alpha_k^2 r^2 \gamma_{\max}^2 C^2
\end{align}
Hence,
\begin{equation}\label{eq:str_rate}
\| w_{k+1} - \opt\|^2 
\leq (1-\alpha_k \mu) \| w_{k} - \opt\|^2  + 2\alpha_k \epsilon R/\mu + \alpha_k^2 r^2 \gamma_{\max}^2 C^2.
\end{equation}
where $\gamma_{\text{max}}$ is the size of the largest cluster, and $C$ is the upperbound on the gradients.

For $0<\tau\leq 1$, the theorem follows by applying Lemma \ref{lemma:seq} to Eq. \ref{eq:str_rate}, with $c=\alpha \mu$, $e=2\alpha\epsilon R/\mu$, and $d=\alpha^2 r^2 \gamma_{\max}^2 C^2$.

For $\tau=0$, where we have a constant step size $\alpha_k = \alpha \leq \frac{1}{\mu}$, we get
\begin{align}
\| w_{k+1} - \opt\|^2 
\leq (1-&\alpha \mu)^{k+1} \| w_{0} - \opt\|^2  \nonumber\\
&+ 2 \alpha \epsilon R \sum_{j=0}^{k} (1-\alpha \mu)^{j} /\mu 
+ \alpha^2 r^2 \gamma_{\max}^2 C^2 \sum_{j=0}^{k} (1-\alpha \mu)^{j}
\end{align}
Since $\sum_{j=0}^{k} (1-\alpha \mu)^{j} \leq \frac{1}{\alpha \mu}$, we get
\begin{equation}\label{eq:rate_strong}
\| w_{k+1} - \opt\|^2 
\leq (1-\alpha \mu)^{k+1} \| w_{0} - \opt\|^2  + 2 \alpha \epsilon R/(\alpha\mu^2) + \alpha^2 r^2 \gamma_{\max}^2 C^2/(\alpha\mu),
\end{equation}
and therefore,
\begin{equation}
\| w_{k+1} - \opt\|^2 
\leq (1-\alpha \mu)^{k+1} \| w_{k} - \opt\|^2  + 2 \epsilon R /\mu^2 + \alpha r^2 \gamma_{\max}^2 C^2 / \mu.
\end{equation}

\subsection{Convergence Rate for Strongly Convex and Smooth Component Functions}\label{app:thm2}
\subsection*{Proof of Theorem \ref{thm:smooth}}
IG updates for cycle $k$ on subset $S$ can be written as
\begin{align}
w_{k+1} = w_k - \alpha_k (\sum_{j\in S} \gamma_j  \nabla f_j(w_k) - e_k) \label{eq:cmp}\\ 
e_k = \sum_{j\in S} \gamma_i(\nabla f_j(w_k) - \nabla f_j(w_{j-1}^k)) 
\end{align}
Building on the analysis of \cite{gurbuzbalaban2015random}, 
for convex and twice continuously differentiable function, we can write
\begin{eqnarray}\label{eq:ak}
\sum_{j\in S} \gamma_j \nabla f_j(w_k) - \sum_{j\in S} \gamma_j \nabla f_j(\opt) = A_k^r(w_k - \opt) 
\end{eqnarray}
where $A_k^r = \int_0^1 \nabla^2 f(\opt + \tau (w_k - \opt)) d\tau$  is average of the Hessian matrices corresponding to the $r$ (weighted) elements of $S$ on the interval $[w_k, \opt]$.

From Eq. \ref{eq:ak} 
we have
\begin{equation}
\sum_{i\in V} (\nabla f_i(w_k)-\nabla f_i(\opt)) - \sum_{j\in S} \gamma_j (\nabla f_j(w_k)-\nabla f_j(\opt))= A_k(w_k - \opt)  - A_k^r(w_k - \opt), 
\end{equation}
where $A_k$ is average of the Hessian matrices corresponding to all the $n$ component functions on the interval $[w_k, \opt]$. Taking norm of both sides and noting that $\sum_{i \in V} f_i(\opt)=0$ and hence $\|\sum_{j \in S} \gamma_j f_j(\opt)\|\leq \epsilon$, we get
\begin{equation}
\| (A_k - A_k^r)(w_{k} - \opt) \| = \| \Big( \sum_{i\in V} \nabla f_i(w_{k}) - \sum_{j\in S} \gamma_j \nabla f_j(w_{k})\Big) + \sum_{j \in S} \gamma_j f_j(\opt) \| \leq 2\epsilon,
\end{equation}
where $\epsilon$ is the estimation error of the full gradient by the weighted gradients of the elements of the subset $S$, and we used $\|\sum_{i\in V} \nabla f_i(w_{k}) - \sum_{j\in S} \gamma_j \nabla f_j(w_{k}) \|\leq \epsilon$.

Substituting Eq. \ref{eq:ak} into Eq. \ref{eq:cmp} we obtain
\begin{equation}
w_{k+1} - \opt = (I-\alpha_k A_k^r)(w_{k}-\opt) + \alpha_k e_k
\end{equation}
Taking norms of both sides, we get
\begin{equation}\label{eq:norm}
\| w_{k+1}-\opt \| \leq \| (I - \alpha_k A_k^r) (w_k-\opt \|) + \alpha_k \| e_k \|
\end{equation}

Now, we have
\begin{align}
\hspace{-.5cm}
\| (I - \alpha_k A_k^r)(w_k - \opt) \| 
&= \| I(w_k - \opt) - \alpha_k A_k^r (w_k - \opt) \|\\
&= \| I(w_k - \opt) - \alpha_k (A_k^r - A_k) (w_k - \opt) - \alpha_k A_k (w_k - \opt) \|\\
&\leq\| (I-\alpha_k A_k)(w_k - \opt)\| + \alpha_k \|(A_k - A_k^r) (w_k - \opt)\| \\
&\leq \| (I-\alpha_k A_k)(w_k - \opt)\| + 2\alpha_k \epsilon
\end{align}

Substituting into Eq. \ref{eq:norm}, we obtain
\begin{equation}
\| w_{k+1}-\opt \| \leq \| I - \alpha_k A_k \| \cdot \| w_k-\opt \| + 2\alpha_k \epsilon + \alpha_k \| e_k \|
\end{equation}

From strong convexity of $\sum_{i \in V} f_i(w)$, and gradient smoothness of each component $f_i(w)$ we have
\begin{equation}
\mu I_n \preceq \sum_{i\in V} \nabla^2 f_i(w), A_k \preceq \beta I_n, \quad x \in \mathcal{X},
\end{equation}
where $\beta=\sum_{i \in V} \beta_i$ In addition, from the gradient smoothness of the components we can write
\begin{align}
\| e_k \| 
&\leq \sum_{j\in S} \gamma_j \beta_j \| w_{k} - w_{j}^k \| \\
&\leq \sum_{j\in S} \gamma_j \beta_j \sum_{i=1}^{j-1} \| w_{i-1}^k - w_{i}^k \| \\
&\leq \sum_{j\in S} \gamma_j \beta_j \alpha_k \sum_{i=1}^{j-1} \| \gamma_i \nabla f_i(w_{i}^k) \| \\
&\leq \alpha_k \beta C r \gamma_{\max}^2,
\end{align}
where in the last line we used $|S|=r$. Therefore,
\begin{align}\label{eq:smooth}
\| w_{k+1} - \opt \| 
&\leq \max(\| 1-\alpha_k \mu\| , \| 1-\alpha_k \beta \|) \| w_{k} - \opt \| + 2\alpha_k \epsilon + \alpha_k^2 \beta C r \gamma_{\max}^2\\
&\leq (1-\alpha_k \mu) \| w_{k} - \opt \| +  2\alpha_k \epsilon + \alpha_k^2 \beta C r \gamma_{\max}^2 \quad \text{if} \quad \alpha_k \beta \leq 1.
\end{align}

For $0<\tau\leq1$, the theorem follows by applying Lemma \ref{lemma:seq} to Eq. \ref{eq:smooth} with $c=\alpha \mu$, $e=2\alpha\epsilon$, $d=\alpha^2\beta C r \gamma_{\max}^2$.
For $\tau=0$, where we have a constant step size $\alpha_k = \alpha \leq  \frac{1}{\beta}$, we get
\begin{align}
\hspace{-5mm}\| w_{k+1} - \opt \| 
&\leq (1-\alpha \mu)^{k+1} \| w_{k} - \opt \| +  2\alpha\epsilon \sum_{i=0}^{k} (1-\alpha \mu)^{i} + \alpha^2 \sum_{i=0}^{k} (1-\alpha \mu)^{i} \beta C r \gamma_{\max}^2\\
&\leq (1-\alpha \mu)^{k+1} \| w_{k} - \opt \| +  2\epsilon/\mu + \alpha \beta C r \gamma_{\max}^2/\mu, \label{eq:tmp}\\
&\leq (1-\alpha \mu)^{k+1} \| w_{k} - \opt \| +  2\epsilon/\mu + C r \gamma_{\max}^2/\mu,
\end{align}
where the inequality in Eq. \ref{eq:tmp} follows since $\sum_{i=0}^{k} (1-\alpha \mu)^{i} \leq \frac{1}{\alpha \mu}$.

\section{Norm of the Difference Between Gradients}

\subsection{Convex Loss Functions}\label{app:grad_bound}
For ridge regression $f_i(w)=\frac{1}{2} ( \langle x_i, w \rangle - y_i )^2 + \frac{\lambda}{2} \| w\|^2$, we have $\nabla f_i(w) = x_i (\langle x_i, w \rangle - y_i) + \lambda w$.
Therefore,
 \begin{eqnarray}
 \| \nabla f_i(w)-\nabla f_j(w)\| = (\| x_i - x_j \|. \| w \|+ \| y_i - y_j \|) \| x_j\|
 \end{eqnarray}
For $\| x_i \| \leq 1$, and 
$|y_i-y_j|\approx0$
we get 
  \begin{eqnarray}
 \| \nabla f_i(w)-\nabla f_j(w)\| \leq \| x_i - x_j \| O(\| w \|)
 \end{eqnarray}

For reguralized logistic regression with $y \in \{-1,1\}$, we have $\nabla f_i(w) = y_i/(1+e^{y_i \langle x_i, w \rangle})$. For $y_i = y_j$ we get
\begin{eqnarray}
\| \nabla f_i(w)-\nabla f_j(w)\| &=&
 \frac{e^{\| x_i - x_j \|.\| w \|}-1}{1+e^{-\langle x_i, x \rangle}} \| x_j \|.
\end{eqnarray}
For $\| x_i \| \leq 1$, using Taylor approximation $e^x \leq 1+x$, and noting that $\frac{1}{1+e^{-\langle x_i, w \rangle}} \leq1$ we get
 \begin{eqnarray}
\| \nabla f_i(w)-\nabla f_j(w)\| \leq \frac{\| x_i - x_j \|.\| w \|}{1+e^{-\langle x_i, w \rangle}}\| x_j \| \leq \| x_i - x_j \| O(\| w \|).
\end{eqnarray}

For classification, we require $y_i=y_j$, hence we can select subsets from each class and then merge the results. On the other hand, in ridge regression we also need $|y_i - y_j|$ to be small.
Similar results can be deduced for other loss functions including square loss, smoothed hinge loss, etc.

Assuming $\| w \|$ is bounded for all $w \in \mathcal{W}$, upper-bounds on the euclidean distances between the gradients can be pre-computed.

\subsection{Neural Networks}\label{app:grad_bound_nn}
Formally, consider an $L$-layer perceptron, where $w^{(l)}\in \mathbb{R}^{M_l \times M_{l-1}}$ is the weight matrix for layer $l$ with $M_l$ hidden units, and $\sigma^{(l)}(.)$ be a Lipschitz continuous activation function. Then, let
\begin{eqnarray}
    x_i^{(0)}&=&x_i, \\
    z_i^{(l)}&=&w^{(l)}x_i^{(l-1)}, \\
    x_i^{(l)}&=&\sigma^{(l)} (z_i^{(l)}).
\end{eqnarray}
With 
\begin{eqnarray}
\Sigma'_l(z_i^{(l)})&=&\text{diag}\big( \sigma'^{(l)}(z_{i,1}^{(l)}), \cdots \sigma'^{(l)}(z_{i,M_l}^{(l)}) \big),\\ \Delta_i^{(l)}&=&\Sigma'_l(z_i^{(l)}) w^T_{l+1} \cdots \Sigma'_l(z_i^{(L-1)})w^T_{L},
\end{eqnarray} 
we have
\begin{align}
    \| \nabla f_i(w) - \nabla & f_j(w) \| \nonumber \\
    = &\| \big( \Delta_i^{(l)}\Sigma'_L(z_i^{(L)}) \nabla f_i^{(L)}(w) \big) \big( x_i^{(l-1)} \big)^T 
    -\big( \Delta_j^{(l)}\Sigma'_L(z_j^{(L)}) \nabla f_j^{(L)}(w) \big) \big( x_j^{(l-1)} \big)^T\| \\
    \leq&\| \Delta_i^{(l)} \| \cdot \| x_i^{(l-1)} \| \cdot \|\Sigma'_L(z_i^{(L)})  \nabla f_i^{(L)}(w)- \Sigma'_L(z_j^{(L)}) \nabla f_j^{(L)}(w) \| \nonumber\\
    & + \| \Sigma'_L(z_j^{(L)})  \nabla f_i^{(L)}(w) \|\cdot \| \Delta_i^{(l)} \big( x_i^{(l-1)} \big)^T - \Delta_j^{(l)} \big( x_j^{(l-1)} \big)^T \| \\
    \leq&\| \Delta_i^{(l)} \| \cdot \| x_i^{(l-1)} \| \cdot \|\Sigma'_L(z_i^{(L)})  \nabla f_i^{(L)}(w)- \Sigma'_L(z_j^{(L)}) \nabla f_j^{(L)}(w)\| \nonumber\\
    & + \| \Sigma'_L(z_j^{(L)})  \nabla f_i^{(L)}(w) \|\cdot \big(\| \Delta_i^{(l)}\| \cdot \| x_i^{(l-1)} \| + \|\Delta_j^{(l)}\| \cdot \| x_j^{(l-1)} \| \big) \\
    \leq& \underbrace{\max_{l,i}\big(\| \Delta_i^{(l)} \| \cdot \| x_i^{(l-1)} \|\big)}_{c_1} \cdot \|\Sigma'_L(z_i^{(L)})  \nabla f_i^{(L)}(w)- \Sigma'_L(z_j^{(L)}) \nabla f_j^{(L)}(w)\| \nonumber\\
    & + \underbrace{\| \Sigma'_L(z_i^{(L)})  \nabla f_i^{(L)}(w) \|\cdot \max_{l,i,j}\big(\| \Delta_i^{(l)}\| \cdot \| x_i^{(l-1)} \| + \|\Delta_j^{(l)}\| \cdot \| x_j^{(l-1)} \| \big)}_{c_2}
\end{align}
Various weight initialization \cite{glorot2010understanding} and activation normalization techniques \cite{ioffe2015batch, ba2016layer} uniformise the activations across samples. As a result, the variation of the gradient norm is mostly captured by the gradient of the loss function with respect to the pre-activation outputs of the last layer of our neural
network \cite{katharopoulos2018not}.
Assuming $\| \Sigma'_L(z_i^{(L)})  \nabla f_i^{(L)}(w) \|$ is bounded, we get 
\begin{align}
\| \nabla f_i(w) - \nabla & f_j(w) \| \leq c_1\|\Sigma'_L(z_i^{(L)})  \nabla f_i^{(L)}(w)- \Sigma'_L(z_j^{(L)}) \nabla f_j^{(L)}(w)\|+c_2,
\end{align}
where $c_1, c_2$ are constants.
The above analysis holds for any affine operation followed by a slope-bounded non-linearity $(|\sigma'(w)| \leq K )$.

\hide{
\section{Additional Experiments}
\begin{figure}
    \centering
    \includegraphics[width=.6\textwidth]{iclr2020-fast/Fig/cov_rebut.png}
    \caption{Loss residual vs. time for IG on 10 subsets of size 10\%, 20\%, 30\%, ..., 100\% selected by \alg, random subsets, and random subsets weighted by $|V|/|S|$. Stepsizes are tuned for every subset separately, by preferring smaller training loss from a large number of parameter combinations for two types of learning scheduling: exponential decay $\eta(t) = \eta_0 a^{\left \lfloor{t/n}\right \rfloor}$ with parameters $\eta_0$ and $a$ to adjust and $t$-inverse $\eta(t) = \eta_0(1 + b {\left \lfloor{t/n}\right \rfloor})^{-1}$ with $\eta_0$ and $b$ to adjust.}
    \label{fig:my_label}
\end{figure}

\begin{figure}
    \centering
    \includegraphics[width=1\textwidth]{iclr2020-fast/Fig/mnist_rebut.png}
    \caption{Training loss and test accuracy for SGD applied to full MNIST vs. subsets of size 60\% selected by \alg and random subsets of size 60\%. Both the random subsets and the subsets found by \alg change at the beginning of every epoch.}
    \label{fig:my_label}
\end{figure}
}

\hide{
\section{Additional Experiments: Training ResNet-32 with Only 1\% of Data per Epoch}
Figure~\ref{fig:cifar_0.01} shows test accuracy vs. number of epochs for training ResNet-32 on CIFAR10.
At the beginning of every epoch a subset of size 1\% is chosen at random or by \alg from the training data. The network is trained using SGD only on the selected subset for that epoch.
We used the standard learning rate schedule for training ResNet-32 on CIFAR10, i.e., we start with initial learning rate of 0.1, and exponentially decay the learning rate by a factor of 0.1 at epochs 100 and 150. 
We can see that SGD on subsets selected by \alg achieved a considerably better generalization performance compared to that of SGD on random subsets.
\begin{figure}[t]
    \centering
    \includegraphics[width=0.5\textwidth]{Fig/resnet32_0.01.pdf}
    \caption{Test accuracy vs. number of epochs for training ResNet-32 on CIFAR10. 
  At the beginning of ever epoch, a new subset of size 1\% of the data is selected by \alg and at random from the training set. SGD is then applied to the selected subsets.}
    \label{fig:cifar_0.01}
\end{figure}
}

\end{document}